\PassOptionsToPackage{table}{xcolor}
\documentclass{article}

\usepackage{graphicx}
\usepackage{booktabs}
\usepackage{makecell}
\usepackage{xcolor}
\usepackage{colortbl}
\usepackage{threeparttable}

 \usepackage[preprint]{neurips_2026}

\usepackage[utf8]{inputenc} 
\usepackage[T1]{fontenc}    
\usepackage{hyperref}       
\usepackage{url}            
\usepackage{booktabs}       
\usepackage{amsfonts}       
\usepackage{nicefrac}       
\usepackage{microtype}      
\usepackage{xcolor}         

\usepackage{amsmath}
\usepackage{amssymb}
\usepackage{mathtools}
\usepackage{amsthm}

\usepackage{graphicx}      
\usepackage{booktabs}
\usepackage{multirow}
\usepackage{makecell}
\usepackage[table]{xcolor}
\usepackage{threeparttable}

\definecolor{bestgreen}{HTML}{D9EAD3}
\definecolor{secondblue}{HTML}{D9EAF7}
\definecolor{methodyellow}{HTML}{FFF2CC}
\definecolor{lightgray}{HTML}{F2F2F2}

\newcommand{\best}[1]{\cellcolor{bestgreen}\textbf{#1}}
\newcommand{\second}[1]{\cellcolor{secondblue}\underline{#1}}

\newcommand{\crit}[1]{\cellcolor{bestgreen}\textbf{#1}}

\newcommand{\oursrow}{\rowcolor{methodyellow}}

\usepackage[capitalize,noabbrev]{cleveref}

\theoremstyle{plain}
\newtheorem{theorem}{Theorem}[section]
\newtheorem{proposition}[theorem]{Proposition}
\newtheorem{lemma}[theorem]{Lemma}

\theoremstyle{definition}
\newtheorem{definition}[theorem]{Definition}
\newtheorem{assumption}[theorem]{Assumption}
\theoremstyle{remark}
\newtheorem{remark}[theorem]{Remark}

\usepackage{algorithm}
\usepackage{algpseudocode}
\usepackage{amsmath}
\usepackage{amsthm}

\title{Graph-Regularized Synthetic Control for Stable Counterfactual Estimation under Donor Redundancy}

\author{%
  Yanlin Zhang \\
  University of Illinois Urbana-Champaign\\
  \textit{yanlinz4@illinois.edu}
  \And
  Sungyong Chung \\
  University of Illinois Urbana-Champaign \\
  \textit{sc151@illinois.edu}
  \And
  Alireza Talebpour \\
    University of Illinois Urbana-Champaign \\
    \textit{ataleb@illinois.edu}\\
}

\begin{document}

\maketitle

\begin{abstract}
Synthetic control methods estimate counterfactual outcomes by weighting untreated donor units to match a treated unit before treatment. In recent applications, however, the donor pool can be large, highly correlated, and short in pre-treatment history, yielding many near-equivalent solutions with unstable and difficult-to-interpret donor weights. We propose Graph-Regularized Synthetic Control (Graph-SC), a convex estimator that augments the classical pre-treatment matching objective with a graph Laplacian penalty. The graph encodes similarity among donor trajectories, encouraging coherent weights over structurally related donors while preserving the transparency of synthetic controls. We provide theoretical results characterizing the bias-variance tradeoff induced by graph regularization and establish efficient convergence guarantees with an accelerated proximal gradient method. Across a controlled simulation under donor redundancy, the California Proposition 99 benchmark, and observational counterfactual analyses on the Waymo Open Motion Dataset, Graph-SC improves weight stability, reduces sensitivity to redundant donor pools, and maintains competitive pre- and post-treatment accuracy relative to classical and regularized synthetic control baselines. Our results suggest that graph regularization is most valuable not as a universal accuracy booster, but as a principled mechanism for selecting stable and interpretable counterfactuals from near-equivalent synthetic controls with comparable pre-treatment fit.
\end{abstract}

\section{Introduction}
Synthetic control (SC) is arguably the most important
Innovation in the policy evaluation literature \cite{athey2017state}. It estimates the counterfactual trajectory of a treated unit (e.g., a region or individual exposed to an intervention) by forming a weighted combination of untreated donors (e.g., comparable regions or individuals not exposed to the intervention) whose pre-intervention behavior matches the treated unit \cite{abadie2003economic,abadie2010synthetic}. Its appeal lies in its interpretability, since the counterfactual is constructed as an explicit mixture of real units, and in its simplicity, as treatment effects are computed directly from post-intervention discrepancies.

A fundamental challenge arises in modern, data-rich applications with large and highly correlated donor pools. When many donors exhibit similar pre-intervention behavior, the weight learning problem admits multiple near equivalent solutions that achieve indistinguishable pre-fit. As a result, the estimation becomes ill-conditioned, with learned weights that are overly sensitive to small perturbations such as noise, resampling, window selection, or feature scaling. This sensitivity propagates to the counterfactual, yielding high variance estimates and unstable treatment effects. The central question motivating this work is whether counterfactual construction can be stabilized without sacrificing the transparency of donor weight mixtures.

We propose \textbf{Graph Synthetic Control (Graph-SC)}, a convex estimator that incorporates pre-intervention donor structure into synthetic control through a learned similarity graph. The donor graph is constructed from pre-intervention features, and the weight vector is regularized via a Laplacian quadratic form. Intuitively, donors who behave similarly before treatment should receive similar weights. Formally, the graph regularizer couples weights along graph edges, selecting stable solutions among many near equivalent fits while preserving interpretability. Graph-SC further combines this structural prior with sparsity, enforced via an $\ell_1$ penalty, yielding stable donor mixtures and a well-conditioned optimization problem. Our Contribution can be summarized as follows.
\begin{itemize}
    \item \textbf{Estimator.} We introduce Graph-SC, a graph-regularized synthetic control method that combines pre-intervention matching, $\ell_1$ sparsity, and Laplacian smoothing on a donor similarity graph learned from pre-intervention data.
    \item \textbf{Theory.} 
    We characterize how graph regularization affects stability through a bias-variance tradeoff. We derive finite sample upper bounds on estimation error that quantify variance reduction from graph smoothing, and a complementary lower bound showing that when the learned graph is misspecified, the induced bias is unavoidable. These results delineate when Graph-SC outperforms classical synthetic control.
    \item \textbf{Computation.} We show Graph-SC is efficiently solvable by (accelerated) proximal gradient methods \citep{beck2009fast} and establish convergence rates specialized to the Graph-SC composite objective.
\end{itemize}
\section{Related Work and Setting}
\subsection{Related Work}
\label{subsec:related_work}
\paragraph{Synthetic control and panel counterfactual.}
Classical SC was introduced for comparative case studies in economics and policy evaluation \cite{abadie2003economic,abadie2010synthetic}. Subsequent work connected SC to broader panel estimators and improved robustness via reweighting, including augmented SC \cite{ben2021augmented}, synthetic difference-in-differences \cite{arkhangelsky2021synthetic}, interactive fixed-effect imputation \cite{xu2017generalized}, and low-rank/matrix-denoising approaches for noisy panels \cite{amjad2018robust}. These methods primarily regularize the structure of the outcome panel, whereas our approach is complementary, regularizing the donor weights directly. By learning a similarity graph from pre-intervention data and applying graph-based smoothing to the weights, we address instability arising from large, highly correlated donor pools.

\paragraph{Graph Regularization}
Regularized variants of SC modify the original constraints or objective to improve performance with many donors, including elastic-net viewpoints and penalized SCM \cite{abadie2021penalized}. Graph-structured regularization, including Laplacian-based smoothing and network-guided penalties, is well studied in regression and clustering \citep{li2008network, hallac2015network}. Graph-SC adapts these ideas to counterfactual construction by encoding a pre-intervention similarity prior over donors through a graph. This induces structured, group-aware donor selection, improves stability, and preserves the counterfactual as an interpretable mixture of real donors.

\subsection{Problem setup}
\label{subsec:setup}

We consider a standard comparative case study with one treated unit (index $1$) and a donor pool of $J$ untreated units (indices $2,\dots,J{+}1$). Time is indexed by $t\in\{1,\dots,T\}$, with an intervention at $T_0$ (pre-intervention $t\le T_0$, post-intervention $t>T_0$). Let $y_{it}(0)$ and $y_{it}(1)$ denote the potential outcomes without and with treatment.

\begin{definition}
\label{def:sc}
Classical synthetic control (SC) \cite{abadie2003economic} estimates donor weights as a convex combination that matches pre-intervention features:
\begin{equation}
\label{eq:sc}
\widehat w_{\mathrm{SC}}
\in
\arg\min_{w\in\mathbb{R}^J}
\frac12\|x_1-X_0 w\|_2^2
\quad
\text{s.t.}\quad
w\ge 0,\;\mathbf{1}^\top w=1.
\end{equation}
The counterfactual is $\widehat y_{1t}(0)=Y_{0t}^\top \widehat w_{\mathrm{SC}}$ for $t>T_0$.
\end{definition}

Classical SC is appealing because it is simple and transparent. However, when $J$ is large and donors are correlated, there may be many distinct weight vectors that achieve essentially the same pre-intervention fit, which makes the counterfactual sensitive to noise and modeling choices.

\begin{proposition}
\label{prop:nonunique}
Suppose there exists a nonzero vector $v\in\mathbb{R}^J$ such that $X_0 v=0$ and $\mathbf{1}^\top v=0$. If $\widehat w$ is feasible for \eqref{eq:sc}, then for all sufficiently small $t\in\mathbb{R}$ such that $\widehat w + t v\ge 0$, the perturbed weight vector $\widehat w + t v$ is also feasible and attains the same objective value:
\[
\|x_1-X_0(\widehat w+t v)\|_2^2 = \|x_1-X_0\widehat w\|_2^2.
\]
Hence, the SC solution is not unique. The proof is provided in subsection ~\ref{proof:nonunique}.
\end{proposition}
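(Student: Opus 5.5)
The argument is a direct verification: check the two constraints of \eqref{eq:sc}, then compute the objective, then read off non-uniqueness. Write $w_t := \widehat w + t v$. Nonnegativity $w_t \ge 0$ holds by hypothesis. The simplex constraint follows from linearity, since $\mathbf{1}^\top w_t = \mathbf{1}^\top \widehat w + t\,\mathbf{1}^\top v = 1 + 0 = 1$. So $w_t$ is feasible for \eqref{eq:sc}.

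For the objective, linearity of $X_0$ gives $X_0 w_t = X_0\widehat w + t X_0 v = X_0 \widehat w$, because $X_0 v = 0$. Hence the residual $x_1 - X_0 w_t$ equals $x_1 - X_0\widehat w$ exactly, and the squared norms coincide. Note that this identity holds for every real $t$; only feasibility restricts $t$.

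For the non-uniqueness claim, I take $\widehat w$ to be an optimal solution, which is the intended reading. Then every feasible $w_t$ attains the optimal value and is also a minimizer. It remains to show that some $t \neq 0$ keeps $w_t \ge 0$, since $v \neq 0$ then makes $w_t \neq \widehat w$.

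This last point is the only real obstacle. If $\widehat w$ lies in the interior of the simplex, any $|t| \le \min_j \widehat w_j / \max_j |v_j|$ works. In general, $\mathbf{1}^\top v = 0$ with $v \neq 0$ forces $v$ to have both positive and negative entries. I will therefore assume, or argue, that $v$ can be chosen with $v_j \ge 0$ wherever $\widehat w_j = 0$. Equivalently, I need $\operatorname{supp}(v^-) \subseteq \operatorname{supp}(\widehat w)$; this is the implicit content of the phrase ``for all sufficiently small $t$ such that $\widehat w + t v \ge 0$''. Under that condition, take $t>0$ with $t \le \min\{\widehat w_j / |v_j| : v_j < 0\}$, which is positive. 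This yields a distinct minimizer, and in fact the whole segment of such $w_t$ consists of minimizers.
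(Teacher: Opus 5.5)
Your argument is the same direct verification the paper uses: the constraint $\mathbf{1}^\top(\widehat w+tv)=1$ together with the unchanged residual $x_1-X_0(\widehat w+tv)=x_1-X_0\widehat w$. Your extra care in the last step is justified, since the paper only asserts nonnegativity for small $t$ ``because $\widehat w$ is feasible'' (true only if $\widehat w$ is positive on the support of $v$) and silently treats $\widehat w$ as a minimizer. You must assume your support condition $\operatorname{supp}(v^-)\subseteq\operatorname{supp}(\widehat w)$ (or the same for $-v$), because it cannot be proved in general: with $J=3$, $X_0=(0,1,1)$, $x_1=-5$, the vector $v=(0,1,-1)$ spans $\ker X_0\cap\mathbf{1}^\perp$, yet $(1,0,0)$ is the unique SC minimizer, so ``hence not unique'' fails without that assumption.
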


\begin{remark}
\label{rem:illposed}
In practice, donors are often nearly collinear, so making $X_0^\top X_0$ ill-conditioned. This creates nearly flat directions in the SC objective: small pre-intervention noise can induce large changes in $\widehat w_{\mathrm{SC}}$, inflating the variance of post-intervention effects. The rest of our method introduces a structured inductive bias to select among many near-equivalent pre-fit solutions.
\end{remark}

\section{Graph-Regularized Synthetic Control}
\subsection{Learning donor similarity graphs}
\label{:graph_learning}

\begin{definition}
\label{def:graph}
Let the donor pool be indexed by $u\in\{1,\dots,J\}$, corresponding to original units $j=u+1$ with pre-intervention feature vectors $x_{u+1}\in\mathbb{R}^k$. Fix a symmetric similarity function $s:\mathbb{R}^k\times\mathbb{R}^k\to\mathbb{R}$ computed using pre-intervention data only, and a neighborhood size $k_{\mathrm{nn}}\in\mathbb{N}$. Define the similarity matrix $S\in\mathbb{R}^{J\times J}$ by
\[
S_{uv}=
\begin{cases}
s(x_{u+1},x_{v+1}), & u\neq v,\\
-\infty, & u=v.
\end{cases}
\]
For each $u$, let $\mathcal{N}(u):=\mathrm{TopKIndices}(S_{u,:},k_{\mathrm{nn}})$ be the indices of the $k_{\mathrm{nn}}$ largest entries of row $u$. Define a directed $k$NN adjacency $A^{\mathrm{dir}}\in\{0,1\}^{J\times J}$ by $A^{\mathrm{dir}}_{uv}=\mathbf{1}\{v\in\mathcal{N}(u)\}$, then symmetrize to obtain an undirected unweighted adjacency
\[
A \;:=\; \mathbf{1}\{A^{\mathrm{dir}}+(A^{\mathrm{dir}})^\top>0\},
\qquad A_{uu}=0.
\]
Let $d_u=\sum_{v=1}^J A_{uv}$ and $D=\mathrm{diag}(d_1,\dots,d_J)$. The normalized Laplacian is
\[
L \;:=\; I - D^{-1/2} A D^{-1/2},
\]
with the convention $(D^{-1/2})_{uu}=0$ if $d_u=0$.
\end{definition}

\paragraph{Graph Implementation.} \label{rem:graph_complexity}Algorithm~\ref{alg:graph_learning} summarizes the implementation. It first computes pairwise similarities between donors, then forms a $k_{\mathrm{nn}}$-nearest-neighbor graph, and finally symmetrizes it to ensure the downstream regularizer is a positive semidefinite quadratic form. Graph construction is $O(J^2k)$ naively and can be made near-linear in $J$ with details in appendix~\ref{proof:graph_complexity}.

\begin{algorithm}[h]
\caption{Donor Similarity Graph Learning}
\label{alg:graph_learning}
\begin{algorithmic}[1]
\Require Donor pre-intervention features $\{x_j\}_{j=2}^{J+1}$, similarity function $s(\cdot,\cdot)$, neighborhood size $k_{\mathrm{nn}}$
\Ensure Normalized graph Laplacian $L$

\State Initialize $A^{\mathrm{dir}}\gets 0\in\mathbb{R}^{J\times J}$
\State Compute pairwise similarities $S_{uv}\gets s(x_{u+1},x_{v+1})$ for all $u\neq v$, and set $S_{uu}\gets -\infty$

\For{$u=1,\ldots,J$}
    \State $\mathcal{N}(u)\gets \mathrm{TopKIndices}(S_{u,:}, k_{\mathrm{nn}})$
    \For{each $v\in\mathcal{N}(u)$}
        \State $A^{\mathrm{dir}}_{uv}\gets 1$
    \EndFor
\EndFor

\State $A \gets \mathbf{1}\{A^{\mathrm{dir}}+(A^{\mathrm{dir}})^\top>0\}$
\State $D \gets \mathrm{diag}(d_1,\ldots,d_J)$, where $d_u=\sum_{v=1}^J A_{uv}$
\State $L \gets I - D^{-1/2} A D^{-1/2}$
\end{algorithmic}
\end{algorithm}

\begin{lemma}[Edge-wise interpretation of Laplacian smoothness]
\label{lem:laplacian_identity}
For the normalized Laplacian $L$ in Definition~\ref{def:graph},
\[
\begin{aligned}
w^\top L w
&= \frac{1}{2}\sum_{u=1}^{J}\sum_{v=1}^{J} A_{uv}
\left(
\frac{w_u}{\sqrt{d_u}}-\frac{w_v}{\sqrt{d_v}}
\right)^2 \\
&= \sum_{(u,v)\in E}
\left(
\frac{w_u}{\sqrt{d_u}}-\frac{w_v}{\sqrt{d_v}}
\right)^2 ,
\end{aligned}
\]
where \(E=\{(u,v):u<v,\ A_{uv}=1\}\).

The proof is provided in appendix~\ref{proof:laplacian_identity}
\end{lemma}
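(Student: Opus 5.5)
The plan is to expand the double sum on the right-hand side and match it term by term with $w^\top L w = w^\top w - w^\top D^{-1/2} A D^{-1/2} w$. Before that, I will check that every degree is positive, so that $1/\sqrt{d_u}$ is well defined. This matters: if some $d_u=0$, the convention in Definition~\ref{def:graph} gives $L_{uu}=1$, so $w^\top L w$ contains $w_u^2$ while the edge sum contains no term involving $w_u$, and the identity would fail. Under Definition~\ref{def:graph}, with $J\ge 2$ and $1\le k_{\mathrm{nn}}\le J-1$, each row $u$ of $A^{\mathrm{dir}}$ has exactly $k_{\mathrm{nn}}$ ones off the diagonal, since $S_{uu}=-\infty$ is never selected. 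Symmetrization only adds edges, so $d_u\ge k_{\mathrm{nn}}\ge 1$ for all $u$, and $D^{-1/2}$ is the genuine inverse square root. I will state this nondegeneracy assumption explicitly.

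For the first equality, I expand the square:
\[
\Bigl(\tfrac{w_u}{\sqrt{d_u}}-\tfrac{w_v}{\sqrt{d_v}}\Bigr)^2=\tfrac{w_u^2}{d_u}+\tfrac{w_v^2}{d_v}-\tfrac{2w_uw_v}{\sqrt{d_ud_v}} .
\]
I then sum each piece against $\tfrac12 A_{uv}$.
\begin{itemize}
\item For the first piece, summing over $v$ first gives $\tfrac12\sum_u \tfrac{w_u^2}{d_u}\sum_v A_{uv} = \tfrac12\sum_u w_u^2$, using $\sum_v A_{uv}=d_u$.
\item The second piece gives the same value by the symmetry $A_{uv}=A_{vu}$.
\item The cross term gives $-\sum_{u,v} w_u \tfrac{A_{uv}}{\sqrt{d_u d_v}} w_v = -w^\top D^{-1/2}AD^{-1/2}w$.
\end{itemize}
Adding the three pieces yields $w^\top(I-D^{-1/2}AD^{-1/2})w = w^\top L w$.

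For the second equality, the summand $A_{uv}(\cdot)^2$ is symmetric in $(u,v)$ and vanishes on the diagonal because $A_{uu}=0$. The full double sum is therefore twice the sum over $u<v$, which cancels the factor $\tfrac12$. Since $A$ is $0/1$-valued, the sum over $u<v$ with $A_{uv}=1$ is exactly the sum over $E$.

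The only real obstacle is the degree-positivity issue in the first paragraph. The algebra itself is routine; the work is in making explicit the kNN-construction argument that excludes isolated vertices.
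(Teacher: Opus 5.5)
Your proof is correct and is essentially the paper's argument run in the opposite direction. The paper substitutes $\tilde w = D^{-1/2}w$ and reduces $w^\top L w$ to $\sum_u d_u\tilde w_u^2-\sum_{u,v}A_{uv}\tilde w_u\tilde w_v$, which is exactly your term-by-term expansion using $\sum_v A_{uv}=d_u$ and the symmetry of $A$.

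Your explicit check that $d_u\ge k_{\mathrm{nn}}\ge 1$, and your handling of the second equality (symmetric summand, zero diagonal), supply steps the paper leaves implicit. In particular, the paper's step $w_u=\sqrt{d_u}\,\tilde w_u$ silently requires positive degrees, and, as you note, the identity fails under the $d_u=0$ convention.
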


\begin{remark}
\label{rem:graph_prior}
Lemma~\ref{lem:laplacian_identity} shows that $w^\top L w$ penalizes edge-wise difference between weights of similar donors. Therefore, it selects among many pre-fit solutions by favoring donor subnetworks, meaning that if donors are connected in the similarity graph, large weight differences are discouraged.
\end{remark}


\subsection{Graph-Regularized Synthetic Control Estimator}

We incorporate donor structure into synthetic control by introducing a graph-based regularizer that acts as a soft prior: donors that exhibit similar pre-intervention behavior should receive similar weights. This yields a convex estimator that preserves the transparency of weight-based counterfactual construction, while avoiding brittle hard constraints such as exact sum-to-one.

This section addresses the following research question: With large, highly correlated donor pools, can a similarity-induced graph prior reduce the variance of the estimated counterfactual and stabilize treatment effect estimates? Moreover, when does the graph term reduce variance and improve stability, and what bias can it introduce?

\begin{definition}
\label{def:gsc}
Given $x_1$, $X_0$, and Laplacian $L$, define Graph Synthetic Control (Graph-SC) and estimate weights by
\begin{equation}
\label{eq:gsc}
\begin{aligned}
\widehat w
\in \arg\min_{w\in\mathbb{R}^J}
\;&
\frac12\|x_1-X_0 w\|_2^2
+ \lambda_1\|w\|_1 \\
&\quad
+ \frac{\lambda_g}{2} w^\top L w
+ \frac{\lambda_{sum}}{2}(\mathbf{1}^\top w-1)^2 .
\end{aligned}
\end{equation}
with tuning parameters $\lambda_1,\lambda_{g},\lambda_{sum}\ge 0$.
\end{definition}



\begin{assumption}[Similarity graph]
\label{as:graph_connected}
The donor similarity graph is undirected and connected, and $L$ is its Laplacian.
\end{assumption}

\begin{assumption}[Pre-intervention noise model]
\label{as:pre_noise}
There exists a target weight vector $w^\star\in\mathbb{R}^J$ with $w^\star\ge 0$
such that
\[
x_1 = X_0 w^\star + \varepsilon,\qquad \mathbb{E}[\varepsilon]=0,\qquad
\mathbb{E}[\varepsilon\varepsilon^\top]=\sigma_x^2 I_K,
\]
and $\varepsilon$ is sub-Gaussian.
\end{assumption}

\begin{definition}
\label{def:mu}
Define the quadratic regularizer Hessian
\[
Q \;\triangleq\; \lambda_g L + \lambda_{sum} \mathbf{1}\mathbf{1}^\top.
\]
Let $\lambda_2(L)$ denote the second-smallest eigenvalue of $L$.
Under Assumption~\ref{as:graph_connected}, $L$ has eigenvalues
$0=\lambda_1(L)<\lambda_2(L)\le \cdots$.
Define the \emph{effective curvature}
\begin{equation}
\label{eq:mu_def}
\mu \;\triangleq\; \lambda_{\min}(Q)
\;=\; \min\{\lambda_g\lambda_2(L),\;\lambda_{sum} J\}.
\end{equation}
\end{definition}

\paragraph{Bias--variance trade-off.} Variance reduction comes from graph regularization, and bias arises from shrinkage toward graph-smooth weights. We formalize this with an ``optimal'' weight vector $w^\star$ as the target that matches the noiseless pre-intervention features and a population minimizer $\bar w$ defined by replacing $x_1$ with its conditional mean $X_0 w^\star$.

\begin{definition}
\label{def:population_minimizer}
\begin{equation}
\label{eq:population_minimizer}
\begin{aligned}
\bar w
\in \arg\min
\;&
\frac12\|X_0 w^\star - X_0 w\|_2^2
+ \lambda_1 \|w\|_1 \\
&\quad
+ \frac{\lambda_g}{2} w^\top L w
+ \frac{\lambda_{sum}}{2}(\mathbf{1}^\top w - 1)^2 .
\end{aligned}
\end{equation}

The regularization bias is $\bar w - w^\star$.
\end{definition}

\begin{theorem}[Bias and variance bounds]
\label{thm:bias_variance_tradeoff}
Under Assumptions~\ref{as:graph_connected}--\ref{as:pre_noise}, let $\mu$ be as in
Definition~\ref{def:mu}. Then the estimation error admits the decomposition.
\[
\widehat w - w^\star = (\widehat w - \bar w) + (\bar w - w^\star),
\]
with the following bounds.

\smallskip
\noindent\textbf{Variance term upper bound:}
\begin{equation}
\label{eq:estimation_term_bound}
\mathbb{E}\big[\|\widehat w - \bar w\|_2^2\big]
\;\le\;
\frac{\sigma_x^2}{\mu^2}\,\mathrm{tr}(X_0^\top X_0).
\end{equation}
\smallskip
\noindent\textbf{Bias term upper bound:}
There exists a subgradient $\xi\in\partial \| \bar w\|_1$ with $\|\xi\|_\infty\le 1$
such that
\begin{equation}
\label{eq:bias_upper}
\begin{aligned}
\|\bar w - w^\star\|_2
\;\le\;
\frac{1}{\mu}\Big(
&\lambda_1\|\xi\|_2 + \lambda_g\|L w^\star\|_2 + \\&\lambda_{sum}|\mathbf{1}^\top w^\star-1|\sqrt{J}
\Big).
\end{aligned}
\end{equation}
In particular, if $\mathbf{1}^\top w^\star=1$ then the last term vanishes and the bias is controlled by the graph-misspecification measure $\|L w^\star\|_2$.
\smallskip

\noindent\textbf{Bias term lower bound under graph misspecification.}
In the graph-only case ($\lambda_1=0$) with $\mathbf{1}^\top w^\star=1$,
\begin{equation}
\label{eq:bias_lower}
\|\bar w - w^\star\|_2
\;\ge\;
\frac{\lambda_g}{\|X_0^\top X_0 + Q\|_{\mathrm{op}}}\,
\|P_{\mathbf{1}^\perp}(L w^\star)\|_2,
\end{equation}
where $P_{\mathbf{1}^\perp}$ projects onto the subspace orthogonal to $\mathbf{1}$.

The proof is provided in appendix~\ref{proof:bias_variance_tradeoff}
\end{theorem}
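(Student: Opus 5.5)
Each of the three bounds will be read off the first-order optimality conditions of the convex programs \eqref{eq:gsc} and \eqref{eq:population_minimizer}. Throughout I use that the smooth part of both objectives has Hessian $H \triangleq X_0^\top X_0 + Q$, and that $H \succeq Q \succeq \mu I$ by Definition~\ref{def:mu} together with Assumption~\ref{as:graph_connected}. So both objectives are $\mu$-strongly convex, and $\widehat w$ and $\bar w$ are unique.

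\emph{Variance term.} The two objectives differ only in the linear term: $\widehat w$ uses $-x_1^\top X_0 w$ and $\bar w$ uses $-(X_0 w^\star)^\top X_0 w$. The difference of the two linear terms is $-\varepsilon^\top X_0 w$. Write both as minimizers of $F(w) - b^\top w$ with $F$ strongly convex. The optimality conditions read $b \in \nabla f(w) + \lambda_1 \partial\|w\|_1$. Monotonicity of $\partial\|\cdot\|_1$ and $\mu$-strong convexity of the smooth part then give
\[
\mu\|\widehat w-\bar w\|_2^2 \le \langle X_0^\top\varepsilon,\ \widehat w-\bar w\rangle ,
\]
hence $\|\widehat w-\bar w\|_2\le \|X_0^\top\varepsilon\|_2/\mu$. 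Squaring and taking expectations uses $\mathbb{E}\|X_0^\top\varepsilon\|_2^2=\mathrm{tr}(X_0^\top\mathbb{E}[\varepsilon\varepsilon^\top]X_0)=\sigma_x^2\,\mathrm{tr}(X_0^\top X_0)$, treating $X_0$ as fixed. This yields \eqref{eq:estimation_term_bound}.

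\emph{Bias upper bound.} The stationarity condition for $\bar w$ is
\[
X_0^\top X_0(\bar w-w^\star)+\lambda_1\xi+\lambda_g L\bar w+\lambda_{sum}\mathbf{1}(\mathbf{1}^\top\bar w-1)=0
\]
for some $\xi\in\partial\|\bar w\|_1$, which gives $\|\xi\|_\infty\le1$. I will write $L\bar w = L(\bar w-w^\star)+Lw^\star$ and $\mathbf{1}^\top\bar w-1=\mathbf{1}^\top(\bar w-w^\star)+(\mathbf{1}^\top w^\star-1)$. This collects everything into
\[
H(\bar w-w^\star) = -\bigl(\lambda_1\xi+\lambda_g Lw^\star+\lambda_{sum}(\mathbf{1}^\top w^\star-1)\mathbf{1}\bigr).
\]
Then $\|H^{-1}\|_{\mathrm{op}}\le 1/\mu$, the triangle inequality, and $\|\mathbf{1}\|_2=\sqrt J$ give \eqref{eq:bias_upper}.

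\emph{Bias lower bound.} Set $\lambda_1=0$ and $\mathbf{1}^\top w^\star=1$. The same identity becomes $H(\bar w-w^\star)=-\lambda_g Lw^\star$. Hence
\[
\lambda_g\|Lw^\star\|_2\le\|H\|_{\mathrm{op}}\,\|\bar w-w^\star\|_2 .
\]
Finally $\|P_{\mathbf{1}^\perp}(Lw^\star)\|_2\le\|Lw^\star\|_2$, which gives \eqref{eq:bias_lower}. The projection is not even needed for this inequality. It is harmless, and it would be exact for the combinatorial Laplacian, where $\mathbf{1}^\top L=0$.

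\emph{Main obstacle.} The delicate step is the variance bound under the nonsmooth $\ell_1$ term. The clean perturbation inequality needs a careful monotonicity argument rather than a closed form. I would pair the two optimality inclusions and subtract them. In the resulting inner product with $\widehat w-\bar w$, the $\ell_1$ subgradient difference is nonnegative, and the quadratic part is at least $\mu\|\widehat w-\bar w\|_2^2$.

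A secondary subtlety is the identification $\mu=\lambda_{\min}(Q)=\min\{\lambda_g\lambda_2(L),\lambda_{sum}J\}$ in Definition~\ref{def:mu}. This equality requires $\mathbf{1}$ to be an eigenvector of $L$, which holds for the unnormalized Laplacian or a regular graph. In general I will only need $Q\succeq\mu I$, so I will use $\mu$ as $\lambda_{\min}(Q)$ directly.
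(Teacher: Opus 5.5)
Your proposal is correct, and it departs from the paper's appendix in ways worth recording.

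\textbf{Variance bound.} The paper never actually proves \eqref{eq:estimation_term_bound}; after writing the decomposition it only treats the two bias bounds. Your perturbation argument supplies it: subtract the optimality inclusions, pair with $\widehat w-\bar w$, use monotonicity of $\partial\|\cdot\|_1$ and $H\succeq\mu I$, then use $\mathbb{E}\|X_0^\top\varepsilon\|_2^2=\sigma_x^2\,\mathrm{tr}(X_0^\top X_0)$ with $X_0$ fixed. It needs only the second-moment part of Assumption~\ref{as:pre_noise}. It would also survive an added constraint $w\ge 0$ unchanged, since the normal cone of $\mathbb{R}^J_+$ is monotone as well.

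\textbf{Bias upper bound.} The paper works with a $\bar w$ that carries $w\ge 0$. This introduces a normal-cone term $\nu$, which is absent from Definition~\ref{def:population_minimizer} but implicit in the prox $(v-\eta\lambda_1\mathbf{1})_+$ used for FISTA. The paper then argues via strong convexity of $\phi$. For the unconstrained problem as literally defined, your exact identity for $H(\bar w-w^\star)$ plus $\|H^{-1}\|_{\mathrm{op}}\le 1/\mu$ is the same computation. Under the constrained reading, inverting $H$ would leave a term $-H^{-1}\nu$ that cannot be dropped from a norm bound. You would instead pair with $\bar w-w^\star$ and discard $\nu^\top(\bar w-w^\star)\ge 0$; this is where $w^\star\ge 0$ enters. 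That pairing is also the rigorous form of the paper's ``drop $\nu$ in a one-sided bound.''

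\textbf{Bias lower bound.} The paper projects $\bar w-w^\star=-\lambda_g H^{-1}Lw^\star$ onto $\mathbf{1}^\perp$ and tacitly commutes $P_{\mathbf{1}^\perp}$ with $H^{-1}$. This is unjustified, because $\mathbf{1}$ need not be an eigenvector of $X_0^\top X_0$, nor of the normalized $L$. Your route, $\lambda_g\|Lw^\star\|_2\le\|H\|_{\mathrm{op}}\|\bar w-w^\star\|_2$ followed by $\|P_{\mathbf{1}^\perp}y\|_2\le\|y\|_2$, sidesteps this and gives a stronger inequality.

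\textbf{Definition of $\mu$.} Your caveat on Definition~\ref{def:mu} is right. For the normalized Laplacian, $\ker L=\mathrm{span}(D^{1/2}\mathbf{1})$. Evaluating $Q$ on that vector gives $\lambda_{\min}(Q)<\lambda_{sum}J$ whenever the degrees are unequal, so the closed form $\min\{\lambda_g\lambda_2(L),\lambda_{sum}J\}$ can overestimate $\lambda_{\min}(Q)$. Reading $\mu=\lambda_{\min}(Q)$ is therefore correct. Positivity of $\mu$, which your uniqueness claim needs, follows from connectivity together with $\lambda_g,\lambda_{sum}>0$, since $\mathbf{1}^\top D^{1/2}\mathbf{1}>0$.
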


\begin{remark}
    Graph regularization reduces variance at rate $\mu^{-2}$ via \eqref{eq:estimation_term_bound} but introduces bias proportional to $\lambda_2\|Lw^\star\|_2/\mu$ via \eqref{eq:bias_upper}. Therefore, Graph-SC is most beneficial when $w^\star$ is smooth (i.e., $\|Lw^\star\|_2$ is small) and the donor design is ill-conditioned (i.e., large $\mathrm{tr}(X_0^\top X_0)$), which is the case where classical SC is unstable. Sparse-only ($\lambda_g=0$) can still be unstable when donors are highly correlated (multiple sparse mixtures fit similarly), while graph-only ($\lambda_1=0$) can spread weights over a neighborhood and introduce bias if the similarity graph is misspecified. Therefore, the combined objective balances these effects: $\lambda_1$ controls sparsity and $\lambda_g$ stabilizes and groups along the graph.
\end{remark}

\subsection{Fast optimization for Graph-SC via proximal gradient}
Graph-SC is a convex composite problem of the form $\min_{w\in\mathbb{R}^J}\; F(w)\triangleq f(w)+g(w)$, where $f$ is smooth with Lipschitz gradient and $g$ is proximable.
This places Graph-SC directly in the standard setting of proximal gradient methods (FISTA) \cite{beck2009fast}, and can achieve a convergence rate of  \(\mathcal{O}(1/k^2)\). The corresponding setup and proof can be found in appendix ~\ref{proof:conv_gsc_compact}.

\section{Experiments}\label{sec:experiments}
We evaluate Graph-Regularized Synthetic Control (Graph-SC) across three settings: a controlled semi-synthetic benchmark with known treatment effects, the California Proposition 99 policy benchmark, and trajectory-level observational counterfactual analysis using the Waymo Open Motion Dataset (WOMD). The experiments are designed around a central question: when many donor combinations produce similar pre-treatment fit, can graph regularization select a more stable and interpretable synthetic control? We therefore report not only conventional pre-treatment fit and post-treatment discrepancy, but also donor-weight concentration, graph coherence, placebo behavior, and sensitivity to donor-pool perturbations. 

\subsection{Semi-Synthetic Study: Donor Redundancy and Bias--Variance Trade-off}\label{subsec:simulation}

We first study a setting where the true counterfactual and treatment effect are known. The purpose is not to mimic a particular empirical dataset, but to isolate the estimator behavior under donor redundancy. We generate donor trajectories from latent smooth clusters so that many donors are individually correlated with the treated unit, and many donor combinations achieve similar pre-treatment fit. This setting creates a synthetic-control Rashomon set: multiple weight vectors have nearly identical pre-treatment loss but can imply different post-treatment counterfactuals.

\noindent\textbf{Data Generation.}
We simulate $N=200$ donor units and one treated unit over $T=70$ periods, with $T_0=50$ pre-treatment and $T_1=20$ post-treatment periods. Donors are assigned to $K=10$ latent clusters. Each cluster trajectory is generated from a low-frequency Fourier basis plus a linear trend, and unit-level trajectories are obtained by adding Gaussian noise. The treated unit is generated as a convex combination of donors from one latent cluster and receives a constant additive treatment effect $\tau=2.0$ after $T_0$.

\noindent\textbf{Estimators.} We compare standard SC, graph-only regularization, sparse-only regularization, and the full Graph-SC objective. The graph is constructed from donor pre-treatment trajectories using $k$-nearest neighbors unless otherwise specified. In this controlled setting, we additionally evaluate recovery of the true donor cluster and stability under donor resampling.

\noindent\textbf{Results.}
Figure~\ref{fig:simulation_results} shows the performance metrics across 200 repetitions, revealing distinct failure modes in the baselines that Graph-SC effectively overcomes. The Graph Regularization Only estimator (Orange) exhibits extreme instability with a variance of approximately 20.24, frequently overestimating the effect magnitude, which confirms that graph smoothness alone is insufficient for conditioning the problem without sparsity. Conversely, the Sparse Regularization Only estimator (Green) successfully reduces variance to approximately 0.44 but suffers from severe shrinking bias of -1.93, underestimating the true effect of 2.0. The Graph-SC (Red) achieves the optimal tradeoff, yielding the lowest Mean Squared Error (MSE) of 0.30 by balancing these objectives. As shown in Figure~\ref{fig:simulation_results} (Bottom), Graph-SC reduces the bias of the sparse estimator by approximately 83\% (from -1.93 to -0.33) while simultaneously maintaining a low variance (0.19) that is significantly better than the baselines.

\begin{figure}[t]
    \centering

    \begin{minipage}[t]{0.48\linewidth}
        \centering
        \includegraphics[width=\linewidth]{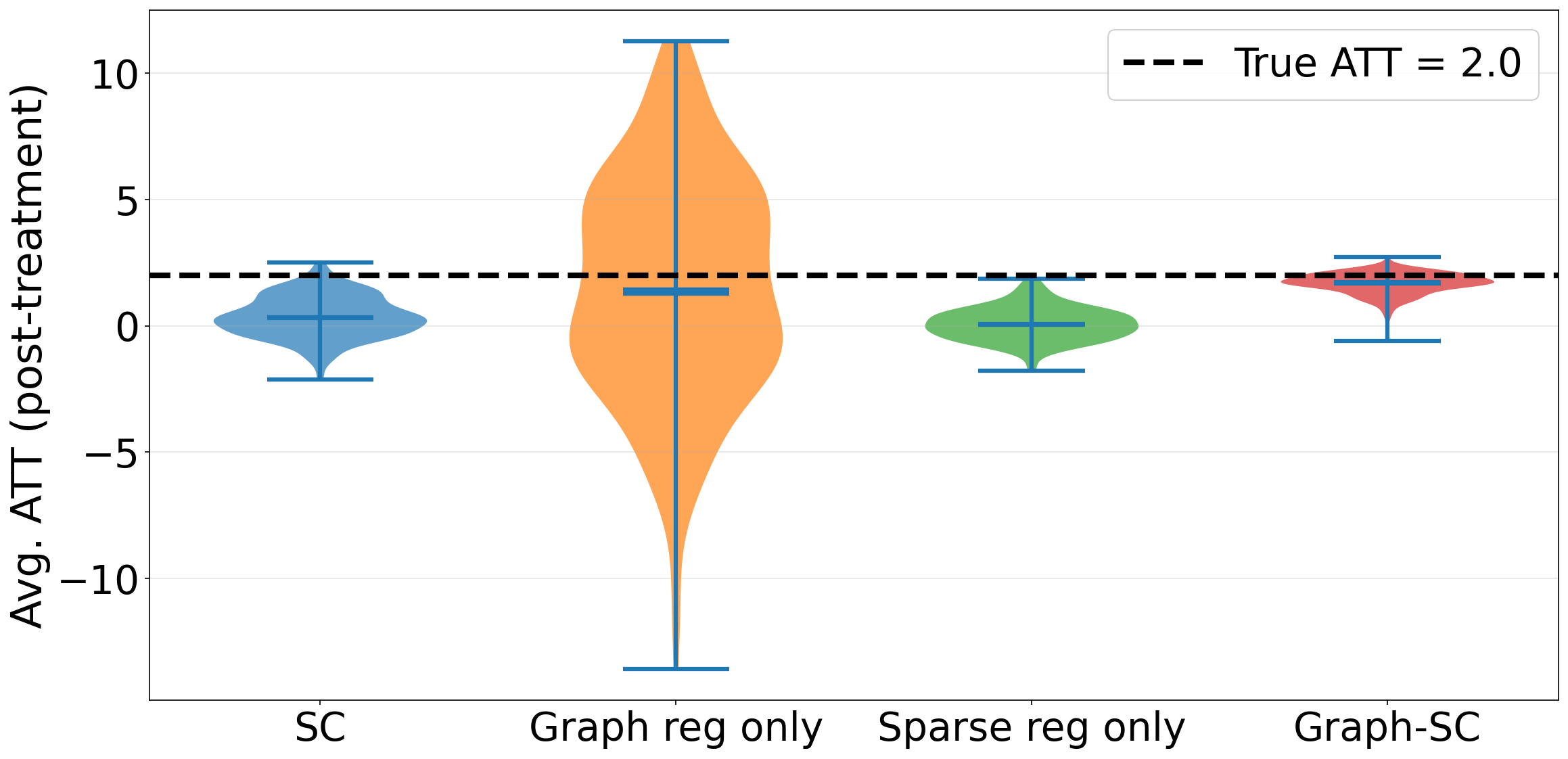}
        \textbf{(a)} Effect distribution
    \end{minipage}
    \hfill
    \begin{minipage}[t]{0.48\linewidth}
        \centering
        \includegraphics[width=\linewidth]{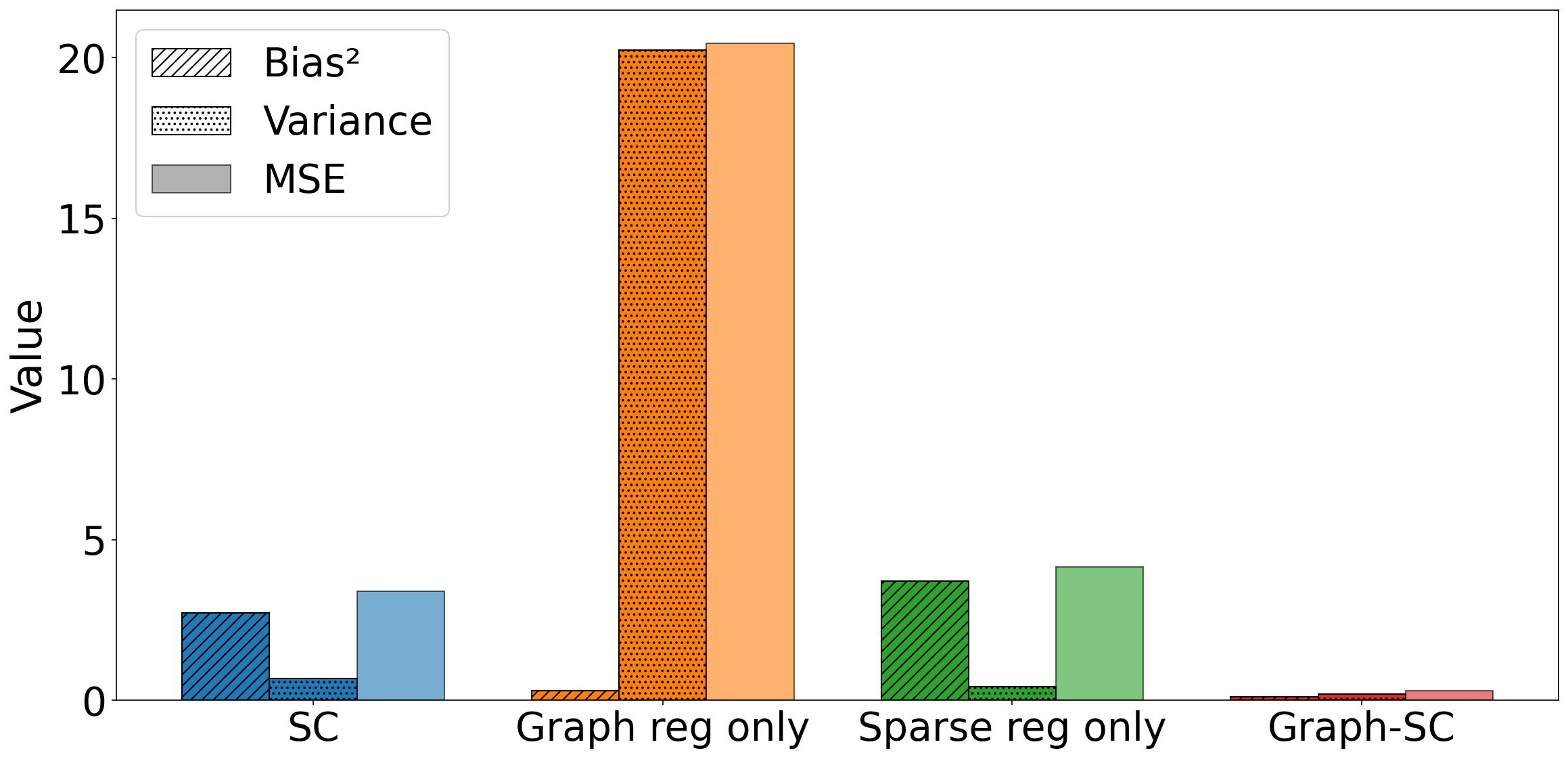}
        \textbf{(b)} Bias--variance decomposition
    \end{minipage}

    \caption{\textbf{Bias--Variance Verification.}
    \textbf{(a)} Distribution of estimated post-treatment effects across 50 repetitions. The dashed line represents the true average treatment effect on the treated (ATT$=2.0$).
    \textbf{(b)} Empirical decomposition of MSE into squared bias and variance. Graph-SC minimizes total error by correcting the heavy bias of Lasso methods while suppressing the variance of graph-only methods.}
    \label{fig:simulation_results}
\end{figure}

\subsection{California Proposition 99: Policy Benchmark}
We next evaluate Graph-SC on the canonical California Proposition 99 benchmark by Abadie et al.~\cite{abadie2010synthetic}, which estimates the effect of California's 1988 tobacco-control initiative on per-capita cigarette sales. This experiment serves two purposes. First, Graph-SC should preserve the established qualitative conclusion: cigarette sales in California declined substantially relative to a synthetic counterfactual after Proposition 99. Second, because this benchmark has a well-studied donor pool, it allows us to examine whether graph regularization changes the donor portfolio in a meaningful way rather than merely improving scalar fit. 

\noindent\textbf{Setup.}
Following the standard design, we use pre-treatment cigarette sales trajectories before 1988 to construct the synthetic counterfactual. For Graph-SC, the donor graph is built from similarity among pre-treatment donor trajectories. The graph penalty encourages donors with similar pre-treatment evolution to receive coherent weights, while the sparsity penalty prevents the estimator from spreading weight indiscriminately across the donor pool. We compare Graph-SC with classical SC\cite{abadie2003economic}, Synthetic-DiD\cite{arkhangelsky2021synthetic}, AugSC\cite{ben2021augmented}, Generalized-SC\cite{xu2017generalized}, Penalized-SCM\cite{abadie2021penalized}, Robust-SC\cite{amjad2018robust}, Matching+SC\cite{kellogg2021combining}, and two ablations: sparse-only and graph-only regularization.

A useful estimator in this benchmark should satisfy three desiderata simultaneously. First, it should achieve an accurate pre-treatment fit, since poor pre-treatment reconstruction weakens the credibility of the counterfactual. Second, it should preserve the canonical post-treatment discrepancy between California and its synthetic control. Third, it should avoid an overly concentrated donor portfolio, since a highly concentrated solution can be sensitive to idiosyncratic donor choices. We therefore report pre-treatment RMSPE, the post-treatment gap magnitude, the fraction of the SC post-treatment gap retained, the number of active donors, and donor-weight entropy. The retained gap is defined as

$\text{Gap retained}
=
\frac{
|\overline{g}_{post}^{\,m}|
}{
|\overline{g}_{post}^{\,SC}|
}
\times 100\%,
$
where $\overline{g}_{post}^{\,m}$ is the average post-treatment gap for method $m$. The entropy diagnostic is
$
H(w)=-\sum_j w_j \log(w_j+\epsilon),$
where larger values indicate a more diversified donor portfolio.

\textbf{Comparison Results.}
Figure~\ref{fig:prop99_placebo}(a) shows that both SC and Graph-SC closely track California before the 1988 intervention. After treatment, the observed California trajectory drops sharply below both synthetic counterfactuals, reproducing the canonical qualitative finding of Proposition 99. Importantly, Graph-SC does not eliminate the post-treatment discrepancy. Instead, it produces a slightly more conservative but still large post-treatment gap. This behavior is desirable: the graph penalty improves pre-treatment structure and donor diversification without collapsing the substantive post-treatment signal. Figure~\ref{fig:prop99_placebo}(b) reports an in-space placebo test. We reassign the 1988 intervention year to each untreated donor state and compute the corresponding placebo gap using Graph-SC. California's post-treatment gap is extreme relative to the placebo distribution, while its pre-treatment gap remains close to zero. We interpret this result as supporting the benchmark validity of Graph-SC rather than as a new identification claim beyond the standard assumptions of synthetic control.

\begin{figure}[t]
    \centering

    \begin{minipage}[t]{0.48\linewidth}
        \centering
        \includegraphics[width=\linewidth]{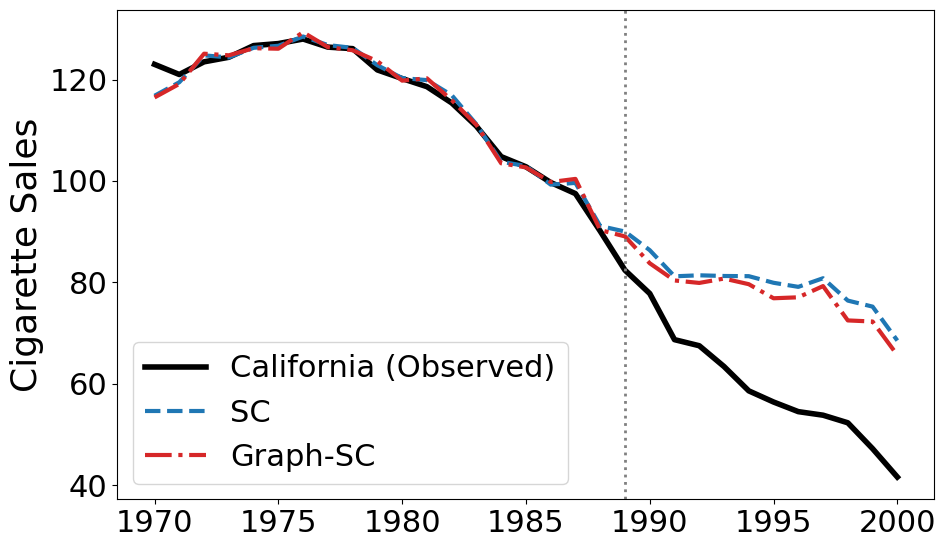}
        \vspace{-0.5em}
        \centerline{\small \textbf{(a)} Observed vs. synthetic trajectories}
    \end{minipage}
    \hfill
    \begin{minipage}[t]{0.48\linewidth}
        \centering
        \includegraphics[width=\linewidth]{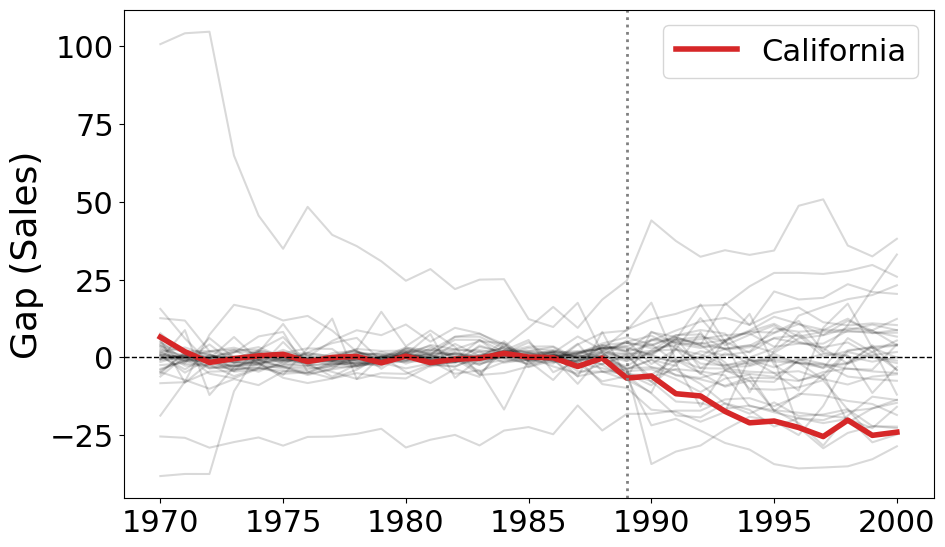}
        \vspace{-0.5em}
        \centerline{\small \textbf{(b)} In-space placebo test}
    \end{minipage}
    
    \caption{
    \textbf{Proposition 99 Benchmark.}
    \textbf{(a)} Observed vs. synthetic cigarette sales. Graph-SC closely tracks the pre-treatment trend while reproducing the post-treatment drop reported in \cite{abadie2010synthetic}.
    \textbf{(b)} In-space placebo test. The estimated gap for California is extreme compared to the distribution of placebo gaps generated by applying Graph-SC to the 38 untreated donor states, indicating statistical significance.
    }
    \label{fig:prop99_placebo}
\end{figure}

Table~\ref{tab:prop99_full_benchmark} shows that Graph-SC is the only method satisfying all three diagnostic criteria: very low pre-treatment error, at least $95\%$ retained post-treatment gap relative to SC, and high donor-weight entropy. Compared with classical SC, Graph-SC reduces pre-treatment RMSPE from $1.6564$ to $0.0893$, corresponding to a $94.6\%$ fit gain. At the same time, it preserves $95.4\%$ of the SC post-treatment gap, with $|\overline{g}_{post}|=18.6245$ compared with $19.5136$ for SC. This indicates that Graph-SC substantially improves pre-treatment reconstruction while retaining the main policy-relevant impact.

\begin{table*}[t]
\centering
\begin{threeparttable}
\caption{
\textbf{Proposition 99 benchmark comparison.}
Graph-SC is the only method that simultaneously achieves very low pre-treatment error, preserves at least $95\%$ of the canonical SC post-treatment gap, and produces a high-entropy donor portfolio. Exact-fit methods obtain near-zero pre-RMSPE but substantially attenuate the post-treatment gap, while high-gap alternatives have much worse pre-treatment fit or highly concentrated donor weights.
}
\label{tab:prop99_full_benchmark}

\scriptsize
\setlength{\tabcolsep}{3.5pt}
\renewcommand{\arraystretch}{1.12}

\begin{tabular}{lrrrrrrrr}
\toprule
\textbf{Method}
& \makecell{\textbf{Pre-RMSPE}\\\textbf{$\downarrow$}}
& \makecell{\textbf{Fit gain}\\\textbf{vs. SC $\uparrow$}}
& \makecell{\textbf{Post-RMSPE}\\\textbf{gap size}}
& \makecell{\textbf{$|\overline{g}_{post}|$}\\\textbf{gap magnitude}}
& \makecell{\textbf{Gap retained}\\\textbf{vs. SC $\uparrow$}}
& \makecell{\textbf{Active}\\\textbf{donors}}
& \makecell{\textbf{Weight}\\\textbf{entropy $\uparrow$}}
& \makecell{\textbf{Criteria}\\\textbf{met}} \\
\midrule

Sparse-only (Ablation)
& \crit{0.0058}
& 99.6\%
& 13.0945
& 12.3503
& 63.3\%
& 20
& 2.5791
& 1/3 \\

Graph-only (Ablation)
& \crit{0.0663}
& 96.0\%
& 18.4235
& 17.3714
& 89.0\%
& 38
& \crit{3.3940}
& 2/3 \\

\oursrow
\textbf{Graph-SC (Ours)}
& \textbf{0.0893}
& 94.6\%
& 19.8918
& 18.6245
& 95.4\%
& 38
& \textbf{3.4112}
& \textbf{3/3} \\

\midrule

SC \cite{abadie2003economic}
& 1.6564
& 0.0\%
& 20.6056
& 19.5136
& \crit{100.0\%}
& 6
& 1.4752
& 1/3 \\

AugSC \cite{ben2021augmented}
& \crit{0.0001}
& 100.0\%
& 13.1322
& 12.3747
& 63.4\%
& 6
& 1.4752
& 1/3 \\

Synthetic-DiD \cite{arkhangelsky2021synthetic}
& 1.6564
& 0.0\%
& 18.8482
& 17.6478
& 90.4\%
& 6
& 1.4752
& 0/3 \\

Generalized-SC \cite{xu2017generalized}
& 1.3718
& 17.2\%
& 7.3630
& 6.4959
& 33.3\%
& 0
& 0.0000
& 0/3 \\

Penalized-SCM \cite{abadie2021penalized}
& 1.6564
& 0.0\%
& 20.6056
& 19.5136
& \crit{100.0\%}
& 6
& 1.4752
& 1/3 \\

Robust-SC \cite{amjad2018robust}
& 2.4607
& $-48.6\%$
& 19.0547
& 17.3461
& 88.9\%
& 38
& \crit{3.2471}
& 1/3 \\

Matching+SC \cite{kellogg2021combining}
& 3.8165
& $-130.4\%$
& 27.0390
& 25.2279
& \crit{129.3\%}
& 4
& 1.2661
& 1/3 \\

\bottomrule
\end{tabular}

\begin{tablenotes}
\footnotesize
\item Note: Green cells indicate that a method satisfies one of three diagnostic criteria: pre-RMSPE $\leq 0.1$, retained post-treatment gap $\geq 95\%$ of SC, or donor-weight entropy $\geq 3.0$. The final column counts how many of these criteria are satisfied. These thresholds are used only as descriptive diagnostics and are not used for training or hyperparameter selection. The Graph-SC row is highlighted in yellow. Post-treatment RMSPE and $|\overline{g}_{post}|$ measure the magnitude of the post-treatment discrepancy rather than supervised prediction error.
\end{tablenotes}

\end{threeparttable}
\end{table*}

The comparison also clarifies why low pre-treatment error alone is not sufficient. AugSC and the sparse-only ablation achieve nearly exact pre-treatment fit, with pre-RMSPE values of $0.0001$ and $0.0058$, respectively. However, both substantially attenuate the post-treatment gap, retaining only $63.4\%$ and $63.3\%$ of the SC gap. These results suggest that exact pre-treatment interpolation can over-adjust the counterfactual and weaken the post-treatment signal. Conversely, Matching+SC produces the largest post-treatment discrepancy, retaining $129.3\%$ of the SC gap, but it has much worse pre-treatment fit with pre-RMSPE $3.8165$ and a concentrated donor portfolio with entropy $1.2661$. Robust-SC has high entropy but also much worse pre-treatment fit than Graph-SC and retains only $88.9\%$ of the SC gap. Thus, the apparent alternatives each fail on at least one diagnostic axis.

\textbf{Role of the graph penalty.}
The ablation results show that the full Graph-SC objective is not simply a cosmetic combination of sparsity and graph regularization. Sparse-only regularization achieves an excellent fit but severely attenuates the post-treatment gap. Graph-only regularization produces a high-entropy donor portfolio but retains only $89.0\%$ of the SC gap. The full Graph-SC estimator balances these effects: it maintains very low pre-treatment RMSPE, preserves the substantive post-treatment gap, and produces the highest donor-weight entropy in the table. This supports the central claim of the paper: graph regularization is most useful when evaluated jointly through fit, signal preservation, and donor-weight stability rather than through a single scalar error metric.

\subsection{Waymo Open Motion Dataset: Observational Counterfactual Contrasts}

We apply Graph-SC to a large-scale safety analysis using the Waymo Open Motion Dataset (WOMD). Prior studies using WOMD have examined scenarios such as human drivers following AVs, AVs following human-driven vehicles, and lane-changing behaviors involving AVs \cite{Chung2025, Zhang2025}. However, a closely related yet distinct question in mixed autonomy traffic remains largely unexplored. Specifically, it is still unclear whether the mere presence of an AV in the surrounding traffic influences the car following behavior of nearby human drivers, particularly given the limited penetration of AVs in real-world traffic.

We analyze Human-Human car-following pairs where an AV becomes visible to the follower ($t \ge 5.0$s, range $<20$m), testing the hypothesis that the presence of AVs acts as a traffic calming stimulus, potentially because human drivers may adopt more conservative spacing to allocate attention to the novel stimulus (the AV) rather than purely forward monitoring. To demonstrate robustness across domains, we apply the same hyperparameters used in the benchmark study ($\lambda_1=0.05, \lambda_g=10.0$). We quantify safety using headway standard deviation (\texttt{headway\_std}), a proxy for flow instability, and minimum Time-to-Collision (\texttt{min\_TTC}), a measure of immediate crash risk. The detailed extraction pipeline of the dataset is provided in Appendix~\ref{app:data_and_setup}. 

\begin{table*}[t]
\centering
\begin{threeparttable}
\caption{
\textbf{WOMD multi-baseline robustness comparison.}
Graph-SC is evaluated on 50 AV-exposed car-following events with 300 candidate donors per event.
For each outcome, green cells indicate that a method satisfies one of three diagnostic criteria:
low pre-exposure error, large post-exposure signal relative to the best pre-fit-qualified method, or controlled placebo discrepancy.
For headway variability, Graph-SC is the only method satisfying all three criteria, achieving near-exact pre-exposure fit while producing the largest post-exposure signal among well-fitting methods.
For minimum TTC, Graph-SC and AugSC both satisfy all three criteria; Graph-SC gives a slightly smaller main gap but also the lowest placebo discrepancy and lower cross-unit dispersion.
}
\label{tab:womd_full_benchmark}

\scriptsize
\setlength{\tabcolsep}{3.2pt}
\renewcommand{\arraystretch}{1.12}

\begin{tabular}{llrrrrrrrr}
\toprule
\textbf{Outcome}
& \textbf{Method}
& \makecell{\textbf{Pre-RMSPE}\textbf{$\downarrow$}}
& \makecell{\textbf{Fit gain}\\\textbf{vs. SC $\uparrow$}}
& \makecell{\textbf{$|\overline{g}_{post}|$}\\\textbf{main gap $\uparrow$}}
& \makecell{\textbf{Gap retained}\\\textbf{vs. best-fit $\uparrow$}}
& \makecell{\textbf{$|\overline{g}_{placebo}|$}\\\textbf{placebo $\downarrow$}}
& \makecell{\textbf{Criteria met}} \\
\midrule

\multirow{8}{*}{\texttt{headway std}}

& SC \cite{abadie2003economic}
& 1.1738
& 0.0\%
& 5.3654
& 82.0\%
& \crit{1.2637}
& 1/3 \\

& AugSC \cite{ben2021augmented}
& \crit{\textbf{0.0148}}
& 98.7\%
& 6.0753
& 92.9\%
& \crit{\textbf{1.2366}}
& 2/3 \\

& Synthetic-DiD \cite{arkhangelsky2021synthetic}
& 1.1738
& 0.0\%
& 5.3654
& 82.0\%
& \crit{1.2637}
& 1/3 \\

& Generalized-SC \cite{xu2017generalized}
& 1.6611
& $-41.5\%$
& 12.2329
& \crit{187.0\%}
& 2.3937
& 1/3 \\

& Robust-SC \cite{amjad2018robust}
& 2.0516
& $-74.8\%$
& 34.0756
& \crit{520.9\%}
& 3.1861
& 1/3 \\

& Penalized-SCM \cite{abadie2021penalized}
& 1.2156
& $-3.6\%$
& 5.5137
& 84.3\%
& \crit{1.1103}
& 1/3 \\

& Matching+SC \cite{kellogg2021combining}
& 1.5504
& $-32.1\%$
& 5.0083
& 76.6\%
& 1.3839
& 0/3 \\

\oursrow
& \textbf{Graph-SC (Ours)}
& \textbf{0.0236}
& 98.0\%
& 6.5412
& \textbf{100.0\%}
& \textbf{1.3076}
& \textbf{3/3} \\

\midrule

\multirow{8}{*}{\texttt{min TTC}}

& SC \cite{abadie2003economic}
& 0.7203
& 0.0\%
& 1.7767
& 75.1\%
& \crit{0.9282}
& 1/3 \\

& AugSC \cite{ben2021augmented}
& \crit{\textbf{0.0385}}
& 94.7\%
& 2.3644
& \crit{\textbf{100.0\%}}
& \crit{1.2018}
& \textbf{3/3} \\

& Synthetic-DiD \cite{arkhangelsky2021synthetic}
& 0.7203
& 0.0\%
& 1.7339
& 73.3\%
& \crit{0.9919}
& 1/3 \\

& Generalized-SC \cite{xu2017generalized}
& 1.2394
& $-72.1\%$
& 2.3770
& \crit{100.5\%}
& 2.4929
& 1/3 \\

& Robust-SC \cite{amjad2018robust}
& 1.3348
& $-85.3\%$
& 2.7689
& \crit{117.1\%}
& 1.3053
& 1/3 \\

& Penalized-SCM \cite{abadie2021penalized}
& 0.7331
& $-1.8\%$
& 1.7563
& 74.3\%
& \crit{\textbf{0.8620}}
& 1/3 \\

& Matching+SC \cite{kellogg2021combining}
& 0.9703
& $-34.7\%$
& 2.2626
& \crit{95.7\%}
& \crit{0.9306}
& 2/3 \\

\oursrow
& \textbf{Graph-SC (Ours)}
& \textbf{0.0440}
& 93.9\%
& 2.3171
& \textbf{98.0\%}
& \textbf{1.1786}
& \textbf{3/3} \\

\bottomrule
\end{tabular}

\begin{tablenotes}
\footnotesize
\item Note: Green cells indicate that a method satisfies one of three outcome-specific diagnostic criteria:
pre-RMSPE $\leq 0.1$;
retained main gap $\geq 95\%$ of the largest main gap among pre-fit-qualified methods;
or placebo gap within $110\%$ of the smallest placebo gap among pre-fit-qualified methods.
The final column counts how many criteria are satisfied.
These thresholds are descriptive diagnostics and are not used for training or hyperparameter selection.
The Graph-SC rows are highlighted in yellow.
Large main gaps from poorly fitting methods should not be interpreted as better counterfactual estimates, because they may reflect pre-exposure mismatch rather than post-exposure signal.
\end{tablenotes}

\end{threeparttable}
\end{table*}

Table~\ref{tab:womd_full_benchmark} summarizes the WOMD multi-baseline comparison. For headway, Graph-SC is the only method satisfying all three diagnostic criteria. Compared with standard SC, Graph-SC reduces pre-exposure RMSPE from $1.1738$ to $0.0236$, corresponding to a $98.0\%$ fit gain, while increasing the average absolute post-exposure gap from $5.3654$ to $6.5412$. Compared with AugSC, which has the lowest pre-RMSPE, Graph-SC produces a larger post-exposure signal ($6.5412$ vs. $6.0753$). This result is important because it shows that Graph-SC does not merely improve pre-treatment fit; it selects a graph-coherent counterfactual that preserves a stronger post-exposure contrast.

For min TTC, the comparison is more conservative. Graph-SC and AugSC both satisfy all three criteria. AugSC has slightly lower pre-RMSPE and a slightly larger main gap, whereas Graph-SC has the smallest placebo discrepancy among the pre-fit-qualified methods and slightly lower cross-unit dispersion. Specifically, Graph-SC achieves pre-RMSPE $0.0440$, retains $98.0\%$ of the strongest pre-fit-qualified TTC signal, and has the lowest qualified placebo gap of $1.1786$. This suggests that Graph-SC preserves the immediate-risk dynamics estimated by the strongest exact-fit baseline while avoiding an increase in placebo discrepancy.

\section{Discussion}
This paper studies synthetic control under donor redundancy: settings where many donor portfolios achieve similar pre-intervention fit but yield unstable weights and potentially different post-intervention counterfactuals. Graph-SC addresses this instability by introducing a donor similarity graph learned from pre-intervention data and regularizing the synthetic-control weights through a Laplacian penalty. Across the controlled simulation, the Proposition 99 benchmark, and WOMD observational counterfactual analyses, the main empirical pattern is consistent with the theory: Graph-SC is most useful when pre-treatment fit alone is insufficient to distinguish among many plausible donor mixtures. In this regime, graph regularization improves donor-weight coherence and robustness while preserving the interpretability of weight-based counterfactual construction.

Several practical choices influence Graph-SC performance. The donor graph depends on the pre-intervention feature representation, similarity metric, neighborhood size, and whether edges are weighted or unweighted. Although our ablations in appendix~\ref{app:simulation_graph_ablation} suggest that the method is reasonably stable across several graph choices, graph learning remains an important modeling decision. The diagnostic thresholds used in our tables are intended to summarize tradeoffs among fit, signal preservation, placebo behavior, and donor-weight stability. Promising directions for future work include scalable graph construction via approximate nearest neighbors, richer graph priors such as weighted or multi-view graphs, and joint graph weight learning, extensions to time-varying weights and multi-outcome panels, and uncertainty quantification tailored to graph-regularized counterfactuals.



\newpage
\bibliographystyle{unsrt}
\small
\bibliography{neurips_2026}

@article{abadie2003economic,
  title={The economic costs of conflict: A case study of the Basque Country},
  author={Abadie, Alberto and Gardeazabal, Javier},
  journal={American economic review},
  volume={93},
  number={1},
  pages={113--132},
  year={2003},
  publisher={American Economic Association}
}

@article{abadie2010synthetic,
  title={Synthetic control methods for comparative case studies: Estimating the effect of California’s tobacco control program},
  author={Abadie, Alberto and Diamond, Alexis and Hainmueller, Jens},
  journal={Journal of the American statistical Association},
  volume={105},
  number={490},
  pages={493--505},
  year={2010},
  publisher={Taylor \& Francis}
}

@article{li2008network,
  title={Network-constrained regularization and variable selection for analysis of genomic data},
  author={Li, Caiyan and Li, Hongzhe},
  journal={Bioinformatics},
  volume={24},
  number={9},
  pages={1175--1182},
  year={2008},
  publisher={Oxford University Press}
}

@article{zhang2025,
  title={Can the Waymo Open Motion Dataset Support Realistic Behavioral Modeling? A Validation Study with Naturalistic Trajectories},
  author={Zhang, Yanlin and Chung, Sungyong and Li, Nachuan and Monzer, Dana and Mahmassani, Hani S and Hamdar, Samer H and Talebpour, Alireza},
  journal={arXiv preprint arXiv:2509.03515},
  year={2025}
}

@article{chung2025,
  title={Characterizing Lane-Changing Behavior in Mixed Traffic},
  author={Chung, Sungyong and Talebpour, Alireza and Hamdar, Samer H},
  journal={arXiv preprint arXiv:2512.07219},
  year={2025}
}

@InProceedings{Kan_2024_icra, author={Chen, Kan and Ge, Runzhou and Qiu, Hang and Ai-Rfou, Rami and Qi, Charles R. and Zhou, Xuanyu and Yang, Zoey and Ettinger, Scott and Sun, Pei and Leng, Zhaoqi and Mustafa, Mustafa and Bogun, Ivan and Wang, Weiyue and Tan, Mingxing and Anguelov, Dragomir}, title={WOMD-LiDAR: Raw Sensor Dataset Benchmark for Motion Forecasting}, month={May}, 
booktitle= {Proceedings of the IEEE International Conference on Robotics and Automation (ICRA)}, 
year={2024} }

@inproceedings{ettinger2021waymo,
  title={Large scale interactive motion forecasting for autonomous driving: The waymo open motion dataset},
  author={Ettinger, Scott and Cheng, Shuyang and Caine, Benjamin and Liu, Chenxi and Zhao, Hang and Pradhan, Sabeek and Chai, Yuning and others},
  booktitle={Proceedings of the IEEE/CVF International Conference on Computer Vision},
  pages={9710--9719},
  year={2021}
}

@article{beck2009fast,
  title={A fast iterative shrinkage-thresholding algorithm for linear inverse problems},
  author={Beck, Amir and Teboulle, Marc},
  journal={SIAM journal on imaging sciences},
  volume={2},
  number={1},
  pages={183--202},
  year={2009},
  publisher={SIAM}
}

@article{athey2017state,
  title={The state of applied econometrics: Causality and policy evaluation},
  author={Athey, Susan and Imbens, Guido W},
  journal={Journal of Economic perspectives},
  volume={31},
  number={2},
  pages={3--32},
  year={2017},
  publisher={American Economic Association 2014 Broadway, Suite 305, Nashville, TN 37203-2418}
}

@article{ben2021augmented,
  title={The augmented synthetic control method},
  author={Ben-Michael, Eli and Feller, Avi and Rothstein, Jesse},
  journal={Journal of the American Statistical Association},
  volume={116},
  number={536},
  pages={1789--1803},
  year={2021},
  publisher={Taylor \& Francis}
}

@article{arkhangelsky2021synthetic,
  title={Synthetic difference-in-differences},
  author={Arkhangelsky, Dmitry and Athey, Susan and Hirshberg, David A and Imbens, Guido W and Wager, Stefan},
  journal={American Economic Review},
  volume={111},
  number={12},
  pages={4088--4118},
  year={2021},
  publisher={American Economic Association 2014 Broadway, Suite 305, Nashville, TN 37203}
}

@article{xu2017generalized,
  title={Generalized synthetic control method: Causal inference with interactive fixed effects models},
  author={Xu, Yiqing},
  journal={Political Analysis},
  volume={25},
  number={1},
  pages={57--76},
  year={2017},
  publisher={Cambridge University Press}
}

@article{amjad2018robust,
  title={Robust synthetic control},
  author={Amjad, Muhammad and Shah, Devavrat and Shen, Dennis},
  journal={Journal of Machine Learning Research},
  volume={19},
  number={22},
  pages={1--51},
  year={2018}
}

@article{abadie2021penalized,
  title={A penalized synthetic control estimator for disaggregated data},
  author={Abadie, Alberto and L’hour, J{\'e}r{\'e}my},
  journal={Journal of the American Statistical Association},
  volume={116},
  number={536},
  pages={1817--1834},
  year={2021},
  publisher={Taylor \& Francis}
}

@inproceedings{hallac2015network,
  title={Network lasso: Clustering and optimization in large graphs},
  author={Hallac, David and Leskovec, Jure and Boyd, Stephen},
  booktitle={Proceedings of the 21th ACM SIGKDD international conference on knowledge discovery and data mining},
  pages={387--396},
  year={2015}
}

@article{kellogg2021combining,
  title={Combining matching and synthetic control to tradeoff biases from extrapolation and interpolation},
  author={Kellogg, Maxwell and Mogstad, Magne and Pouliot, Guillaume A and Torgovitsky, Alexander},
  journal={Journal of the American statistical association},
  volume={116},
  number={536},
  pages={1804--1816},
  year={2021},
  publisher={Taylor \& Francis}
}

\newpage
\appendix

\section{Proofs}

\subsection{Proof of Proposition \ref{prop:nonunique}}
\begin{proof}
\label{proof:nonunique}
Feasibility: since $\mathbf{1}^\top v=0$, we have $\mathbf{1}^\top(\widehat w+t v)=\mathbf{1}^\top \widehat w=1$. Nonnegativity holds by construction for sufficiently small $t$ (because $\widehat w$ is feasible). Objective value:
\[
x_1-X_0(\widehat w+t v)=(x_1-X_0\widehat w)-t(X_0 v)=(x_1-X_0\widehat w),
\]
because $X_0 v=0$. Squaring gives equality of losses. Thus, multiple feasible minimizers exist.
\end{proof}

\subsection{Proof of the complexity of algorithm~\ref{alg:graph_learning}}
\label{proof:graph_complexity}
Computing all pairwise similarities costs $O(J^2\cdot c_s)$ time, where $c_s$ is the cost of evaluating $s(\cdot,\cdot)$. For common similarities with dense features, $c_s=O(k)$, so the dominant cost is $O(J^2 k)$. Selecting $k_{\mathrm{nn}}$ nearest neighbors for each donor can be implemented in $O(J)$ expected time per row using linear-time selection, giving $O(J^2)$ expected time overall, or in $O(J\log k_{\mathrm{nn}})$ per row using a heap, giving $O(J^2\log k_{\mathrm{nn}})$ overall. When the adjacency is stored sparsely, symmetrization and degree computation cost $O(|E|)$ with $|E|=O(Jk_{\mathrm{nn}})$, and subsequent Laplacian-vector products $Lw$ also cost $O(|E|)$. Memory is $O(J^2)$ if the full similarity matrix is stored, but can be reduced to $O(Jk_{\mathrm{nn}})$ by computing similarities row-by-row and keeping only the top-$k_{\mathrm{nn}}$ candidates per node.

\subsection{Proof of Lemma~\ref{lem:laplacian_identity} }
\begin{proof}
\label{proof:laplacian_identity}
Using $L=I-D^{-1/2}AD^{-1/2}$,
\[
w^\top L w
=
\|w\|_2^2 - w^\top D^{-1/2}AD^{-1/2}w.
\]
Let $\tilde w:=D^{-1/2}w$. Then $w^\top D^{-1/2}AD^{-1/2}w=\tilde w^\top A \tilde w=\sum_{u,v}A_{uv}\tilde w_u\tilde w_v$ and $\|w\|_2^2=\sum_u d_u\tilde w_u^2$ since $w_u=\sqrt{d_u}\tilde w_u$. Therefore,
\begin{align*}
w^\top L w
&=
\sum_u d_u\tilde w_u^2 - \sum_{u,v}A_{uv}\tilde w_u\tilde w_v \\
&=
\frac12\sum_{u,v}A_{uv}\big(\tilde w_u^2+\tilde w_v^2-2\tilde w_u\tilde w_v\big)
=
\frac12\sum_{u,v}A_{uv}(\tilde w_u-\tilde w_v)^2,
\end{align*}
which is exactly the stated identity after substituting $\tilde w_u=w_u/\sqrt{d_u}$.
\end{proof}

\subsection{Proof of Theorem~\ref{thm:bias_variance_tradeoff}}
\begin{proof}
\label{proof:bias_variance_tradeoff}
The decomposition $\widehat w-w^\star=(\widehat w-\bar w)+(\bar w-w^\star)$.


\paragraph{Bias upper bound.}
Let $\phi(w)$ be the smooth part of the population objective:
\[
\phi(w)=\frac12\|X_0w^\star-X_0w\|_2^2+\frac{\lambda_g}{2}w^\top Lw+\frac{\lambda_{sum}}{2}(\mathbf{1}^\top w-1)^2.
\]
Then $\phi$ is $\mu$-strongly convex.
Optimality at $\bar w$ gives
\[
0 \in \nabla\phi(\bar w) + \lambda_1 \partial\| \bar w\|_1 + N_{\mathbb{R}^J_+}(\bar w),
\]
where $N_{\mathbb{R}^J_+}$ is the normal cone.
Let $\xi\in\partial\|\bar w\|_1$ and $\nu\in N_{\mathbb{R}^J_+}(\bar w)$ satisfy
$\nabla\phi(\bar w)+\lambda_1\xi+\nu=0$.
Also note that $w^\star$ is feasible ($w^\star\ge 0$), so
$\nu^\top(\bar w-w^\star)\ge 0$ by normal cone properties.

Strong convexity implies the inequality.
\[
\mu\|\bar w-w^\star\|_2
\le \|\nabla\phi(w^\star)-\nabla\phi(\bar w)\|_2
=
\|\nabla\phi(w^\star)+\lambda_1\xi+\nu\|_2
\le \|\nabla\phi(w^\star)\|_2+\lambda_1\|\xi\|_2+\|\nu\|_2,
\]
and using $\nu^\top(\bar w-w^\star)\ge 0$ one can drop $\nu$ in a one-sided bound.
Compute
\[
\nabla\phi(w)=X_0^\top(X_0w-X_0w^\star)+\lambda_g Lw+\lambda_{sum}(\mathbf{1}^\top w-1)\mathbf{1}.
\]
Thus at $w^\star$,
\[
\nabla\phi(w^\star)=\lambda_g Lw^\star + \lambda_{sum}(\mathbf{1}^\top w^\star-1)\mathbf{1}.
\]
Combine and use $\|\mathbf{1}\|_2=\sqrt{J}$ to obtain \eqref{eq:bias_upper}.

\paragraph{bias lower bound.}
Set $\lambda_1=0$ and assume $\mathbf{1}^\top w^\star=1$.
Ignoring the inequality constraint,
the population objective is differentiable, and the minimizer satisfies
\[
(X_0^\top X_0 + Q)\bar w = X_0^\top X_0 w^\star + \lambda_{sum} \mathbf{1}.
\]
Subtract $(X_0^\top X_0 + Q)w^\star$ from both sides to get
\[
(X_0^\top X_0 + Q)(\bar w-w^\star)= -\lambda_g Lw^\star,
\]
hence
\[
\bar w-w^\star = -(X_0^\top X_0 + Q)^{-1}\lambda_g Lw^\star.
\]
Project both sides onto $\mathbf{1}^\perp$; note that $Q$ acts as $\lambda_gL$ on $\mathbf{1}^\perp$
and $\|(X_0^\top X_0+Q)^{-1}\|_{\mathrm{op}}=1/\lambda_{\min}(X_0^\top X_0+Q)$.
Then
\[
\begin{aligned}
\|\bar w-w^\star\|_2
&\ge
\|P_{\mathbf{1}^\perp}(\bar w-w^\star)\|_2 \\
&=
\lambda_g\|(X_0^\top X_0+Q)^{-1} P_{\mathbf{1}^\perp}(Lw^\star)\|_2\\
&\ge
\frac{\lambda_g}{\|X_0^\top X_0+Q\|_{\mathrm{op}}}\|P_{\mathbf{1}^\perp}(Lw^\star)\|_2,
\end{aligned}
\]

which is \eqref{eq:bias_lower}.
\end{proof}

\subsection{Proofs for the optimal convergence rate of solving Graph-SC}
\label{proof:conv_gsc_compact}

\paragraph{Preliminaries and notation}
We study $F(w)=f(w)+g(w)$ where $f$ is convex and differentiable with $L_f$-Lipschitz gradient, and
$g$ is proper, closed, convex with an efficiently computable proximal map. The proximal operator is
\[
\mathrm{prox}_{\eta g}(v)\in\arg\min_{u\in\mathbb{R}^J}\left\{\frac12\|u-v\|_2^2+\eta g(u)\right\}.
\]

\paragraph{Smoothness of the GSC loss and the constant $L_f$}
\label{app:smoothness}

\begin{lemma}[Gradient and Hessian of $f$]
\label{lem:grad_hess_full}
For $f$ defined in \eqref{eq:fg_def},
\[
\nabla f(w)=X_0^\top(X_0w-x_1)+\lambda_g Lw+\lambda_{sum}(\mathbf{1}^\top w-1)\mathbf{1},
\qquad
\nabla^2 f = X_0^\top X_0+\lambda_g L+\lambda_{sum}\mathbf{1}\mathbf{1}^\top.
\]
\end{lemma}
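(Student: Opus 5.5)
The plan is to identify $f$ explicitly and then differentiate it term by term. Equation~\eqref{eq:fg_def} is not reproduced above, but the Graph-SC objective \eqref{eq:gsc} splits naturally into a smooth part and a nonsmooth part:
\[
f(w)=\tfrac12\|x_1-X_0w\|_2^2+\tfrac{\lambda_g}{2}w^\top Lw+\tfrac{\lambda_{sum}}{2}(\mathbf{1}^\top w-1)^2,\qquad g(w)=\lambda_1\|w\|_1 .
\]
If the nonnegativity constraint is kept, $g$ also carries the indicator of $\mathbb{R}^J_+$. I will work with this split. Every term of $f$ is a quadratic in $w$, so the gradient and Hessian follow from the standard rules for quadratic forms. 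No approximation or limiting argument is needed.

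\textbf{Step 1: the fit term.} Expand
\[
\tfrac12\|x_1-X_0w\|_2^2=\tfrac12 w^\top X_0^\top X_0 w - x_1^\top X_0 w+\tfrac12\|x_1\|_2^2 .
\]
Since $X_0^\top X_0$ is symmetric, the gradient is $X_0^\top X_0w-X_0^\top x_1=X_0^\top(X_0w-x_1)$, and the Hessian is $X_0^\top X_0$.

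\textbf{Step 2: the graph term.} The matrix $L=I-D^{-1/2}AD^{-1/2}$ is symmetric because $A$ is symmetric by construction in Definition~\ref{def:graph}. The derivative of $\tfrac{\lambda_g}{2}w^\top Lw$ is therefore $\tfrac{\lambda_g}{2}(L+L^\top)w=\lambda_g Lw$, and its Hessian is $\lambda_g L$.

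\textbf{Step 3: the sum-to-one term.} Write it as $h(\mathbf{1}^\top w)$ with $h(s)=\tfrac{\lambda_{sum}}{2}(s-1)^2$. The chain rule gives the gradient $h'(\mathbf{1}^\top w)\mathbf{1}=\lambda_{sum}(\mathbf{1}^\top w-1)\mathbf{1}$. Differentiating this once more in $w$ gives the Hessian $\lambda_{sum}\mathbf{1}\mathbf{1}^\top$.

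Adding Steps 1--3 yields both stated formulas. The Hessian does not depend on $w$, which is consistent with $f$ being quadratic. By Definition~\ref{def:mu}, the Hessian equals $X_0^\top X_0+Q$.

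The only point needing care is the symmetry of $L$. If $L$ were not symmetric, the graph gradient would be $\tfrac{\lambda_g}{2}(L+L^\top)w$ rather than $\lambda_g Lw$. Symmetry holds here, including for isolated nodes under the convention $(D^{-1/2})_{uu}=0$, because $D^{-1/2}$ is diagonal and $A$ is symmetric. The second point to check is that the reference to \eqref{eq:fg_def} matches the split above. If $f$ were instead defined to include the constraint, the lemma would have to be read as a statement on the interior of the domain.
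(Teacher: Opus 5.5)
Your proposal is correct and takes the same approach as the paper, which also differentiates the three quadratic terms of $f$ in \eqref{eq:fg_def} one at a time, uses $L=L^\top$ for the graph term, and reads off the constant Hessian because $f$ is quadratic. Your split of the objective into $f$ and $g$ matches the paper's definition exactly, and your extra check that $L$ stays symmetric under the isolated-node convention is a harmless refinement.
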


\begin{proof}
Differentiate each term:
\[
\nabla \tfrac12\|x_1-X_0w\|^2 = -X_0^\top(x_1-X_0w)=X_0^\top(X_0w-x_1),
\]
\[
\nabla \tfrac{\lambda_g}{2}w^\top Lw = \lambda_g Lw \quad (\text{since }L=L^\top),
\]
\[
\nabla \tfrac{\lambda_{sum}}{2}(\mathbf{1}^\top w-1)^2 = \lambda_{sum}(\mathbf{1}^\top w-1)\mathbf{1}.
\]
Since $f$ is quadratic, differentiating again yields the constant Hessian above.
\end{proof}

\begin{lemma}[Lipschitz gradient]
\label{lem:lipschitz_full}
Let $A\triangleq X_0^\top X_0+\lambda_g L+\lambda_{sum}\mathbf{1}\mathbf{1}^\top$.
Then $\nabla f$ is $L_f$-Lipschitz with $L_f=\|A\|_{\mathrm{op}}=\lambda_{\max}(A)$, and
\[
L_f\le \|X_0\|_{\mathrm{op}}^2+\lambda_g\lambda_{\max}(L)+\lambda_{sum} J.
\]
\end{lemma}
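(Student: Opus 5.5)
The plan is to use the fact that $f$ is a quadratic with constant Hessian $A$, which Lemma~\ref{lem:grad_hess_full} gives. For any $w,w'\in\mathbb{R}^J$ this yields the exact identity $\nabla f(w)-\nabla f(w')=A(w-w')$, since the affine parts $-X_0^\top x_1-\lambda_{sum}\mathbf{1}$ cancel. Hence
\[
\|\nabla f(w)-\nabla f(w')\|_2 \le \|A\|_{\mathrm{op}}\,\|w-w'\|_2 .
\]
This constant is sharp: taking $w-w'$ to be a top eigenvector of $A$ attains equality, so $\|A\|_{\mathrm{op}}$ is the best Lipschitz constant. That justifies setting $L_f=\|A\|_{\mathrm{op}}$.

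Next I identify $\|A\|_{\mathrm{op}}$ with $\lambda_{\max}(A)$. This needs $A$ to be symmetric positive semidefinite, and I check each summand separately:
\begin{itemize}
\item $X_0^\top X_0\succeq 0$ trivially.
\item $\lambda_g L\succeq 0$ because $\lambda_g\ge 0$ and $w^\top L w\ge 0$ by the sum-of-squares form in Lemma~\ref{lem:laplacian_identity}.
\item $\lambda_{sum}\mathbf{1}\mathbf{1}^\top\succeq 0$ since $w^\top\mathbf{1}\mathbf{1}^\top w=(\mathbf{1}^\top w)^2$.
\end{itemize}
For a symmetric PSD matrix the operator norm equals the largest eigenvalue.

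For the upper bound I apply the triangle inequality for the operator norm, equivalently subadditivity of $\lambda_{\max}$ on symmetric matrices (Weyl):
\[
\lambda_{\max}(A)\le \lambda_{\max}(X_0^\top X_0)+\lambda_g\lambda_{\max}(L)+\lambda_{sum}\lambda_{\max}(\mathbf{1}\mathbf{1}^\top).
\]
The three terms are then evaluated as follows.
\begin{itemize}
\item $\lambda_{\max}(X_0^\top X_0)=\|X_0\|_{\mathrm{op}}^2$ by the definition of the spectral norm.
\item $\mathbf{1}\mathbf{1}^\top$ has rank one with eigenvector $\mathbf{1}$ and eigenvalue $\|\mathbf{1}\|_2^2=J$, so its largest eigenvalue is $J$.
\end{itemize}
Combining these gives the stated bound.

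No step is a real obstacle. The only point needing care is the PSD-ness of $L$, so that $\lambda_g\lambda_{\max}(L)$ is the right term and no absolute values are needed. If Lemma~\ref{lem:laplacian_identity} does not cover isolated vertices (where the convention $(D^{-1/2})_{uu}=0$ makes $L_{uu}=1$), I note that such vertices contribute a $w_u^2\ge 0$ term, so PSD-ness still holds. One could also remark that $\lambda_{\max}(L)\le 2$ for the normalized Laplacian, giving the explicit bound $L_f\le\|X_0\|_{\mathrm{op}}^2+2\lambda_g+\lambda_{sum}J$, but this is not needed for the statement.
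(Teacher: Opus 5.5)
Your proof is correct and follows the paper's argument essentially step for step: write $\nabla f(w)=Aw-b$, bound $\|A(u-v)\|_2\le\|A\|_{\mathrm{op}}\|u-v\|_2$, and then apply subadditivity of $\lambda_{\max}$ together with $\lambda_{\max}(\mathbf{1}\mathbf{1}^\top)=J$. Your explicit check that $A\succeq 0$ is a useful addition, since it justifies the identification $\|A\|_{\mathrm{op}}=\lambda_{\max}(A)$ that the paper only asserts; your isolated-vertex caveat also patches a case that the proof of Lemma~\ref{lem:laplacian_identity} silently drops, although it cannot arise under connectivity or a $k_{\mathrm{nn}}\ge 1$ graph.
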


\begin{proof}
By Lemma~\ref{lem:grad_hess_full}, $\nabla f(w)=Aw-b$ for $b=X_0^\top x_1+\lambda_{sum}\mathbf{1}$.
Thus for any $u,v$,
\[
\|\nabla f(u)-\nabla f(v)\|_2=\|A(u-v)\|_2\le \|A\|_{\mathrm{op}}\|u-v\|_2,
\]
so $L_f=\|A\|_{\mathrm{op}}=\lambda_{\max}(A)$. For the upper bound, use
$\lambda_{\max}(B+C)\le \lambda_{\max}(B)+\lambda_{\max}(C)$ and
$\lambda_{\max}(\mathbf{1}\mathbf{1}^\top)=\|\mathbf{1}\|_2^2=J$, giving
\[
\lambda_{\max}(A)\le \lambda_{\max}(X_0^\top X_0)+\lambda_2\lambda_{\max}(L)+\lambda_{sum} J
=\|X_0\|_{\mathrm{op}}^2+\lambda_g\lambda_{\max}(L)+\lambda_{sum} J.
\]
\end{proof}

\paragraph{Closed-form proximal map for $g$}
\label{app:prox}

\begin{lemma}
\label{lem:prox_full}
Let $g(w)=\lambda_1\|w\|_1$. For any $\eta>0$,
\[
\mathrm{prox}_{\eta g}(v)=(v-\eta\lambda_1\mathbf{1})_+,
\]
i.e., $[\mathrm{prox}_{\eta g}(v)]_j=\max\{0,v_j-\eta\lambda_1\}$.
\end{lemma}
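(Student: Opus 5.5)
The plan is to reduce the proximal problem to $J$ independent scalar problems and solve each one in closed form. One point of interpretation must be settled first. The claimed formula $(v-\eta\lambda_1\mathbf{1})_+$ can only hold if $g$ also enforces nonnegativity, i.e.\ if it is read as
\[
g(w)=\lambda_1\|w\|_1+\iota_{\mathbb{R}^J_+}(w),
\]
where $\iota_{\mathbb{R}^J_+}$ is the convex indicator of the nonnegative orthant. This matches how the bias argument in the proof of Theorem~\ref{thm:bias_variance_tradeoff} treats the problem, since there the optimality condition includes the normal cone $N_{\mathbb{R}^J_+}$. Without the indicator, the proximal map of $\lambda_1\|\cdot\|_1$ is the usual soft-thresholding $\mathrm{sign}(v_j)\max\{|v_j|-\eta\lambda_1,0\}$. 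That differs from the stated formula whenever $v_j<-\eta\lambda_1$, so the lemma as written is false for plain $\ell_1$. I would therefore state and prove the lemma for $g=\lambda_1\|\cdot\|_1+\iota_{\mathbb{R}^J_+}$ and note the plain-$\ell_1$ case in a remark.

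\textbf{Step 1 (reduce to a linear term).} On the feasible set $u\ge 0$ we have $\|u\|_1=\mathbf{1}^\top u$. The proximal objective therefore becomes
\[
\sum_{j=1}^J\Big(\tfrac12(u_j-v_j)^2+\eta\lambda_1 u_j\Big)\quad\text{subject to } u_j\ge 0 \text{ for all } j.
\]
This is separable in the coordinates, so the minimization splits into $J$ independent scalar problems.

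\textbf{Step 2 (solve each scalar problem).} Each problem minimizes $h_j(u_j)=\tfrac12(u_j-v_j)^2+\eta\lambda_1u_j$ over $u_j\ge0$. Completing the square gives
\[
h_j(u_j)=\tfrac12\big(u_j-(v_j-\eta\lambda_1)\big)^2+\text{const}.
\]
So the minimizer is the projection of $v_j-\eta\lambda_1$ onto $[0,\infty)$, namely $\max\{0,v_j-\eta\lambda_1\}$. Because $h_j$ is strongly convex, this minimizer is unique, and the $\in$ in the definition of $\mathrm{prox}$ becomes an equality. As a cross-check, the KKT conditions give the same answer: either $u_j>0$ with $u_j=v_j-\eta\lambda_1$, or $u_j=0$ with $0-v_j+\eta\lambda_1\ge 0$.

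\textbf{Step 3 (assemble).} Stacking the coordinates gives $\mathrm{prox}_{\eta g}(v)=(v-\eta\lambda_1\mathbf{1})_+$.

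There is no real technical obstacle here. The only delicate step is the interpretive one at the start: recognizing that the formula silently requires the nonnegativity constraint, and saying so explicitly. Otherwise the FISTA analysis in this appendix would be applying the wrong proximal map.
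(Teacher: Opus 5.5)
Your proof is correct and follows the paper's route: restrict to $u\ge 0$ so that $\|u\|_1=\mathbf{1}^\top u$, split into $J$ scalar problems, and clip the stationary point $v_j-\eta\lambda_1$ at zero. Your caveat is also right, since the paper's own proof silently writes the proximal problem as $\min_{u\ge 0}$ and so actually proves the lemma for $g=\lambda_1\|\cdot\|_1+\iota_{\mathbb{R}^J_+}$ (for plain $\ell_1$ the formula fails whenever $v_j<-\eta\lambda_1$), and your completing-the-square argument justifies the clipping step more carefully than the paper's ``enforcing $u_j\ge 0$''.
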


\begin{proof}
The proximal problem is
\[
\min_{u\ge 0}\ \frac12\|u-v\|_2^2+\eta\lambda_1\|u\|_1.
\]
It is separable. For each coordinate $j$ we solve
\[
\min_{u_j\ge 0}\ \frac12(u_j-v_j)^2+\eta\lambda_1|u_j|.
\]
On $u_j\ge 0$, $|u_j|=u_j$. The unconstrained minimizer satisfies $(u_j-v_j)+\eta\lambda_1=0$,
hence $u_j=v_j-\eta\lambda_1$. Enforcing $u_j\ge 0$ yields $u_j^\star=\max\{0,v_j-\eta\lambda_1\}$.
Stacking coordinates proves the claim.
\end{proof}

\paragraph{Convergence of Algorithm~\ref{alg:fista_gsc_compact}}
\label{app:fista_rate}

By Lemma~\ref{lem:lipschitz_full}, $f$ is convex with $L_f$-Lipschitz gradient.
By Lemma~\ref{lem:prox_full}, the proximal operator of $g$ is well-defined and computable.
Therefore, $F=f+g$ satisfies the assumptions of FISTA \cite{beck2009fast}.

\begin{theorem}[FISTA rate specialized to GSC]
\label{thm:fista_full}
Let $F=f+g$ be the GSC objective, and let $L_f$ be as in
Lemma~\ref{lem:lipschitz_full}. Run Algorithm~\ref{alg:fista_gsc_compact} with $\eta=1/L_f$.
Then for any minimizer $w^\star$,
\[
F(w^{k})-F(w^\star)\le \frac{2L_f\|w^0-w^\star\|_2^2}{(k+1)^2},\qquad k\ge 1.
\]
\end{theorem}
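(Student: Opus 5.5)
The plan is to verify that the Graph-SC objective fits the hypotheses of the FISTA theorem of \cite{beck2009fast}, and then to quote its rate with the constant $L_f$. First, $f(w)=\frac12\|x_1-X_0w\|_2^2+\frac{\lambda_g}{2}w^\top Lw+\frac{\lambda_{sum}}{2}(\mathbf{1}^\top w-1)^2$ is a convex quadratic. Its Hessian $A=X_0^\top X_0+\lambda_g L+\lambda_{sum}\mathbf{1}\mathbf{1}^\top$ (Lemma~\ref{lem:grad_hess_full}) is a sum of positive semidefinite matrices. Here $L\succeq 0$ follows from the edge-wise identity in Lemma~\ref{lem:laplacian_identity}. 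Lemma~\ref{lem:lipschitz_full} then gives that $\nabla f$ is $L_f$-Lipschitz. Second, $g$ is proper, closed and convex. I interpret it as $\lambda_1\|w\|_1$ plus the indicator of $\mathbb{R}^J_+$, consistent with Lemma~\ref{lem:prox_full}. Its prox is the closed form $(v-\eta\lambda_1\mathbf{1})_+$. So each iteration of Algorithm~\ref{alg:fista_gsc_compact} is exactly a FISTA step $w^k=\mathrm{prox}_{\eta g}(y^k-\eta\nabla f(y^k))$, followed by the momentum update with $t_{k+1}=(1+\sqrt{1+4t_k^2})/2$.

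Rather than only cite the result, I would reproduce the standard argument in outline, since the constant matters. The first key inequality is the descent/prox lemma. For $\eta=1/L_f$, any $x$, and $p=\mathrm{prox}_{\eta g}(y-\eta\nabla f(y))$, it states
\[
F(x)-F(p)\ge \frac{L_f}{2}\|p-y\|_2^2+L_f\langle y-x,\,p-y\rangle .
\]
This follows from the quadratic upper bound $f(p)\le f(y)+\langle\nabla f(y),p-y\rangle+\frac{L_f}{2}\|p-y\|^2$, convexity of $f$ and $g$, and the prox optimality condition $L_f(y-p)-\nabla f(y)\in\partial g(p)$. Next apply it twice, once with $x=w^k$ and once with $x=w^\star$, taking $y=y^{k+1}$. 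Combine the two with weights $(t_k-1)$ and $1$, and use $t_k^2-t_k=t_{k-1}^2$. With $v_k=F(w^k)-F(w^\star)$ and $u_k=t_kw^k-(t_k-1)w^{k-1}-w^\star$, this yields the Lyapunov inequality
\[
\frac{2}{L_f}t_k^2v_k+\|u_k\|^2\le \frac{2}{L_f}t_{k-1}^2v_{k-1}+\|u_{k-1}\|^2 .
\]

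Telescoping down to the first step gives $t_k^2 v_k\le \frac{L_f}{2}\|w^0-w^\star\|^2$. Here the base case comes from one application of the prox lemma with $y^1=w^0$ and $t_1=1$. Induction on the recursion shows $t_k\ge (k+1)/2$. Combining, $v_k\le 2L_f\|w^0-w^\star\|_2^2/(k+1)^2$ for all $k\ge1$, as claimed.

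The main obstacle is the bookkeeping in the Lyapunov step, namely getting the weights and the identity $t_k^2-t_k=t_{k-1}^2$ to line up so the cross terms collapse into $\|u_k\|^2-\|u_{k+1}\|^2$. A secondary point is the index convention of Algorithm~\ref{alg:fista_gsc_compact}: whether $w^k$ or $w^{k-1}$ carries the rate. I will match it to Beck--Teboulle's indexing, where their $x_k$ satisfies the bound with $(k+1)^2$. Existence of a minimizer is also needed, and it follows because $F$ is coercive whenever $\mu>0$, or because $g\ge0$ and the quadratic is bounded below. The statement quantifies over minimizers, so I simply assume one exists.
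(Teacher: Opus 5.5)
Your proposal is correct and has the same outer structure as the paper. Both use Lemmas~\ref{lem:grad_hess_full}--\ref{lem:prox_full} to check that $f$ is convex with $L_f$-Lipschitz gradient and that $g$ has a closed-form prox, and both then run the Beck--Teboulle machinery. The difference is the potential that drives the rate. The paper invokes an estimate sequence $\Phi_k(x)=t_kQ_{L_f}(x,z^k)+\frac{L_f}{2}\|x-w^0\|_2^2$ built from the quadratic upper model, and only asserts (``one can show'') that $\Phi_{k+1}\le\Phi_k$. You instead use the potential $\frac{2}{L_f}t_k^2v_k+\|u_k\|_2^2$ that Beck and Teboulle actually use, derived from the prox inequality. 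That makes your argument self-contained and shows where the constant $2$ comes from.

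The paper's route is shorter, but it leaves the key monotonicity unproved and is loose on indices. Algorithm~\ref{alg:fista_gsc_compact} starts from $t_0=1$ and $z^0=w^0$, so its $z^{k-1}$ and $t_{k-1}$ play the roles of Beck--Teboulle's $y_k$ and $t_k$. Hence $w^k$, not $w^{k-1}$, carries the stated rate, which settles the question you left open. In the paper's own indexing, the potential argument gives $t_k^2\big(F(w^{k+1})-F(w^\star)\big)\le\frac{L_f}{2}\|w^0-w^\star\|_2^2$. The $t_{k+1}^2$ version written in its sketch is one index stronger than that, although the final $(k+1)^{-2}$ bound is unaffected.

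Your reading of $g$ as $\lambda_1\|\cdot\|_1$ plus the indicator of $\mathbb{R}^J_+$ is the right one, and it is needed. With the literal $g=\lambda_1\|w\|_1$ of \eqref{eq:fg_def}, the prox would be soft-thresholding rather than $(v-\eta\lambda_1\mathbf{1})_+$. The bound would then be measured against a minimizer of a different problem from the one the algorithm actually solves.

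Two small repairs to your write-up. First, applying the prox inequality with $x=w^k$ and $y=y^{k+1}$ requires weights $(t_{k+1}-1)$ and $1$ and the identity $t_{k+1}^2-t_{k+1}=t_k^2$; this gives the step from $k$ to $k+1$. Your weights $(t_k-1)$ and identity $t_k^2-t_k=t_{k-1}^2$ go with $x=w^{k-1}$ and $y=y^k$. In either case, it is the extrapolation rule for $y$ that identifies the subtracted square as the previous $\|u\|_2^2$. Second, boundedness below does not by itself give a minimizer. Here one exists by the Frank--Wolfe theorem, since on $\mathbb{R}^J_+$ the objective is a quadratic, nonnegative, plus the linear term $\lambda_1\mathbf{1}^\top w$.
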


\begin{proof}
We provide a full proof by instantiating the standard estimate-sequence argument for FISTA.

We define
\begin{align}
\label{eq:fg_def}
f(w)
&=
\frac12\|x_1-X_0w\|_2^2
+\frac{\lambda_g}{2}w^\top Lw
+\frac{\lambda_{sum}}{2}(\mathbf{1}^\top w-1)^2,\\
g(w)
&=
\lambda_1\|w\|_1,
\end{align}
where $L\succeq 0$ is the Laplacian of an undirected, unweighted similarity graph.

Each iteration only needs products
$X_0w$, $X_0^\top v$, and $Lw$ outlined as algorithm~\ref{alg:fista_gsc_compact}.

\begin{algorithm}
\caption{FISTA for Graph Synthetic Control}
\label{alg:fista_gsc_compact}
\begin{algorithmic}[1]
\Require $x_1, X_0, L$, parameters $\lambda_1,\lambda_g,\lambda_{\mathrm{sum}}$, step size $\eta \le 1/L_f$
\State Initialize $w^{0}\in\mathbb{R}^J_+$, set $z^{0}\gets w^{0}$ and $t_0\gets 1$
\For{$k=0,1,2,\ldots$}
    \State $u^k \gets z^k - \eta\big[
    X_0^\top(X_0 z^k-x_1)
    + \lambda_g L z^k
    + \lambda_{\mathrm{sum}}(\mathbf{1}^\top z^k-1)\mathbf{1}
    \big]$
    \State $w^{k+1} \gets \big(u^k-\eta\lambda_1\mathbf{1}\big)_{+}$
    \State $t_{k+1}\gets \dfrac{1+\sqrt{1+4t_k^2}}{2}$
    \State $z^{k+1}\gets w^{k+1} + \dfrac{t_k-1}{t_{k+1}}(w^{k+1}-w^{k})$
\EndFor
\end{algorithmic}
\end{algorithm}

\begin{theorem}[Convergence of first-order solvers for Graph-SC]
\label{thm:conv_gsc_compact}

Define
\begin{equation}
\label{eq:Lmu_defs_compact}
L_f \triangleq \lambda_{\max}\!\Big(X_0^\top X_0+\lambda_gL+\lambda_{sum}\mathbf{1}\mathbf{1}^\top\Big),
\end{equation}
\begin{equation}
\label{eq:mu_defs_compact}
\mu_f \triangleq \lambda_{\min}\!\Big(X_0^\top X_0+\lambda_gL+\lambda_{sum}\mathbf{1}\mathbf{1}^\top\Big).
\end{equation}

Let $F=f+g$ be defined in \eqref{eq:fg_def}, and choose $\eta=1/L_f$ with $L_f$ in \eqref{eq:Lmu_defs_compact}.
Then:
\begin{enumerate}
\item (\emph{Convex}) The iterates of Algorithm~\ref{alg:fista_gsc_compact} satisfy
\[
F(w^{k})-F(w^\star)\;\le\;\frac{2L_f\|w^0-w^\star\|_2^2}{(k+1)^2},\qquad k\ge 1,
\]
for any minimizer $w^\star$.
\item (\emph{Strongly convex}) If $\mu_f>0$, then the restarted variant of
Algorithm~\ref{alg:fista_gsc_compact} (periodic restart) converges linearly:
\[
F(w^{k})-F(w^\star)\;\le\;C\Big(1-\sqrt{\mu_f/L_f}\Big)^{k}\!,
\]
for a constant $C$ depending on initialization.
\end{enumerate}
\end{theorem}

\begin{proof}
Define the quadratic upper model of $f$ at a point $y$:
\[
Q_{L_f}(x,y)\triangleq f(y)+\langle \nabla f(y),x-y\rangle+\frac{L_f}{2}\|x-y\|_2^2+g(x).
\]
By $L_f$-smoothness (Lemma~\ref{lem:lipschitz_full}), for all $x,y$,
\[
F(x)=f(x)+g(x)\le f(y)+\langle \nabla f(y),x-y\rangle+\frac{L_f}{2}\|x-y\|^2+g(x)=Q_{L_f}(x,y).
\]
Algorithm~\ref{alg:fista_gsc_compact} performs
\[
w^{k+1}=\arg\min_x Q_{L_f}(x,z^k),
\]
and the extrapolation parameters satisfy $t_{k+1}^2-t_{k+1}=t_k^2$.

Following \cite{beck2009fast}, define
\[
\Phi_k(x)\triangleq t_k Q_{L_f}(x,z^k) + \frac{L_f}{2}\|x-w^0\|_2^2.
\]
One can show that the choice of $t_k$ and the update of $z^k$
ensure
\[
\Phi_{k+1}(x)\le \Phi_k(x)\quad \forall x,
\]
and in particular for $x=w^\star$,
\[
t_{k+1}^2\big(F(w^{k+1})-F(w^\star)\big)\le \frac{L_f}{2}\|w^0-w^\star\|_2^2.
\]
Finally, since $t_k\ge (k+1)/2$, we obtain
\[
F(w^k)-F(w^\star)\le \frac{L_f\|w^0-w^\star\|_2^2}{2t_k^2}
\le \frac{2L_f\|w^0-w^\star\|_2^2}{(k+1)^2}.
\]
This completes the proof.
\end{proof}

\section{Data Processing and Experiment Setup}
\label{app:data_and_setup}

\subsection{Compute Environment and Runtime}
\label{app:compute}

All experiments were run on a single local workstation: a 14-inch 2023 MacBook Pro with an Apple M2 Pro chip and 16 GB unified memory. The experiments were executed in a CPU-only setting; no discrete GPU, cloud instance, or computing cluster was used. Although the Apple M2 Pro includes an integrated GPU, our implementation does not rely on GPU acceleration.

The proposed estimator solves a convex optimization problem over donor weights and is computationally lightweight for the panel sizes considered in this paper. The controlled simulation, Proposition 99 benchmark, and WOMD observational counterfactual experiments can each be reproduced on a standard laptop. The dominant computation is repeated fitting of synthetic-control baselines over hyperparameter grids, placebo-time checks, and donor-perturbation trials.

The total compute required for experiments was modest and did not require specialized hardware. Failed or exploratory runs were performed on the same local machine and were not substantially more expensive than the reported experiments.

\subsection{Simulation Study}
\paragraph{Data Generation Process.}
To rigorously test the bias-variance tradeoff, we simulated a panel of $N=200$ donors and 1 treated unit over $T=70$ time steps ($T_{pre}=50, T_{post}=20$). The data generating process was designed to mimic the smooth periodicity of physical traffic flow:
\begin{itemize}
    \item \textbf{Cluster Structure:} Units were assigned to 10 latent clusters.
    \item \textbf{Base Trajectories:} The centroid of each cluster was generated using a low-frequency Fourier series (sum of 6 sine and cosine terms with random coefficients) superimposed on a linear trend.
    \item \textbf{Unit Noise:} Individual trajectories were generated by adding Gaussian noise ($\sigma=1.0$) to their respective cluster centroids.
    \item \textbf{Treatment Effect:} A constant additive treatment effect of $\tau=2.0$ was applied to the treated unit in the post-intervention period.
\end{itemize}

\paragraph{Graph Construction.}
We constructed the donor similarity graph ($L_{true}$) as an unweighted adjacency matrix where $A_{ij}=1$ if and only if donor $i$ and donor $j$ belong to the same latent cluster.

\paragraph{Hyperparameters.}
The estimators were configured as follows:
\begin{itemize}
    \item \textbf{Graph-SC (Proposed):} $\lambda_1 = 0.05$ (Sparsity), $\lambda_g = 10.0$ (Graph Smoothness), $\lambda_{sum} = 50.0$ (Soft Simplex Constraint).
    \item \textbf{Sparse Only:} $\lambda_1 = 0.05, \lambda_g = 0$.
    \item \textbf{Graph Only:} $\lambda_1 = 0, \lambda_g = 10.0$.
\end{itemize}

\subsection{California Prop 99 Benchmark}
\paragraph {Data Source.}
We utilized the canonical dataset from Abadie et al. ~\cite{abadie2010synthetic}, consisting of annual per-capita cigarette sales for 39 US states from 1970 to 2000. California is the treated unit (intervention year 1989), and the remaining 38 states serve as the donor pool. Features were standardized (zero mean, unit variance) across donors to ensure uniform scaling for the regularization terms.

\paragraph {Graph Learning.}
The donor similarity graph was learned purely from pre-intervention data (1970-1988). We constructed a $k$NN graph ($k=5$) based on the Euclidean distance between pre-treatment sales trajectories. The graph edges were unweighted to provide an interpretable adjacency structure.

\paragraph {Hyperparameters.}
We maintained the same hyperparameters as the simulation study to demonstrate robustness:
\begin{itemize}
    \item Graph Regularization: $\lambda_g = 10.0$
    \item Sparsity Regularization: $\lambda_1 = 0.05$
    \item Sum-to-one Constraint: $\lambda_{sum} = 50.0$
\end{itemize}

\subsection{AV Application}
\textbf{Dataset Extraction.}
To construct the car-following dataset, we processed the Waymo Open Motion Dataset (WOMD) \cite{Kan_2024_icra, ettinger2021waymo} to identify sustained leader-follower interactions. Unlike datasets with explicit lane associations, WOMD provides raw object trajectories and map features. We associated vehicle centroids with the nearest lane centerlines provided in the map topology. A leader was identified as the closest preceding vehicle along the ego vehicle's current or connected lane graph. For more details, see \cite{Chung2025}.
We filtered the extracted pairs using the following strict criteria:
\begin{itemize}
    \item \textbf{Interaction Type:} We exclusively filtered for Human-Following-Human pairs to isolate the indirect influence of surrounding AVs.
    \item \textbf{Duration:} Minimum continuous following duration of 10 seconds.
    \item \textbf{Proximity:} Maximum spacing of 50 meters between leader and follower to ensure the follower is in a car-following state.
    \item \textbf{Motion:} Minimum speed threshold of $3 \text{ m/s}$ to exclude static or parking trajectories.
\end{itemize}

\textbf{Treatment and Control Definition.}
A core challenge was defining the treatment (exposure to an AV) in a continuous traffic stream. We utilized a visibility-based exposure logic. An AV is considered visible if the Euclidean distance is less than $20.0 \text{ m}$ and it lies within the follower's forward field of view.
We segmented the trajectories into 10-second windows ($T=100$ frames) and classified them as:
\begin{itemize}
    \item \textbf{Treated Unit:} The follower is \textit{not} exposed to the AV for the first 5 seconds ($t \le 50$) but becomes exposed for the final 5 seconds ($t > 50$).
    \item \textbf{Donor Pool (Control Group):} The follower is \textit{never} exposed to an AV throughout the entire 10-second window.
\end{itemize}

From the set of valid treated trajectories identified using these criteria, we randomly sampled $N=50$ units for the aggregate analysis reported in Table 1.

\textbf{Smoothing and Metrics.}
Raw trajectories were smoothed using a two-stage process: an outlier removal filter ($1.5 \times \text{IQR}$) followed by a rolling mean (window size $= 8$). Safety metrics were computed on the smoothed data:
\begin{itemize}
    \item \textbf{Time-to-Collision (TTC):} Calculated as $TTC = - \text{spacing} / \text{relative\_speed}$, capped at $20.0 \text{s}$.
    \item \textbf{Headway Standard Deviation:} Calculated as the standard deviation of time headway over the analysis window.
\end{itemize}

\textbf{Hyperparameters:}
For consistency, the Graph-SC estimator used the same parameters as the benchmark study:
\begin{itemize}
    \item \textbf{Graph:} $k$NN ($k=5$) on pre-exposure (0-5s) kinematic features.
    \item \textbf{Regularization:} $\lambda_1 = 0.05$, $\lambda_g = 10.0$, and $\lambda_{sum}=50.0$.
\end{itemize}

\section{Semi-Synthetic Graph Construction and Hyperparameter Ablation}
\label{app:simulation_graph_ablation}

This appendix reports additional ablations for the semi-synthetic study in Section~\ref{subsec:simulation}. The goal is to diagnose whether the behavior of Graph-SC is driven by a single hand-picked donor graph or whether it is stable across reasonable graph-construction choices. We evaluate graph size, similarity metric, edge weighting, and graph misspecification through random rewiring.

\paragraph{Simulation setup.}
The semi-synthetic data-generating process follows Section~\ref{subsec:simulation}. We simulate $N=200$ donor units and one treated unit over $T=70$ periods, with $T_0=50$ pre-treatment periods and $T_1=20$ post-treatment periods. Donors are assigned to $K=10$ latent trajectory clusters. Each cluster trajectory is generated from a smooth low-frequency Fourier basis plus a linear trend, and donor trajectories are obtained by adding Gaussian noise with standard deviation $\sigma=1.0$. The treated unit receives a constant additive treatment effect $\tau=2.0$ after the treatment time.

\paragraph{Graph construction choices.}
For learned Graph-SC variants, the donor graph is constructed from pre-treatment donor trajectories only. We consider Euclidean, correlation, and cosine similarity. For the main Euclidean graph, we vary the $k$NN graph size with $k \in \{3,5,10\}$. We also compare weighted and unweighted Euclidean $k$NN graphs. To stress-test the graph prior, we randomly rewire $25\%$, $50\%$, $75\%$, and $100\%$ of graph edges while keeping the same graph size. This evaluates whether Graph-SC relies on the exact local donor-neighborhood structure.

\paragraph{Evaluation metrics.}
Because the true treatment effect is known, we report ATT mean squared error,
\[
\mathrm{ATT\text{-}MSE}
=
\mathbb{E}\left[
(\widehat{\tau}-\tau)^2
\right],
\]
along with squared bias and variance of $\widehat{\tau}$. We also report mean pre-treatment RMSPE and mean post-treatment RMSPE relative to the noiseless counterfactual. The post-treatment metric is not used for hyperparameter selection; it is used only for evaluation.

\begin{table}[t]
\centering
\begin{threeparttable}
\caption{
\textbf{Graph-construction ablation in the semi-synthetic study.}
The table compares sparse-only, standard SC, graph-only, oracle Graph-SC, and learned Graph-SC variants.
}
\label{tab:simulation_graph_ablation_full}
\scriptsize
\setlength{\tabcolsep}{3.8pt}
\renewcommand{\arraystretch}{1.08}
\begin{tabular}{lrrrrr}
\toprule
\textbf{Estimator / graph setting}
& \makecell{\textbf{ATT}\\\textbf{MSE} $\downarrow$}
& \makecell{\textbf{Bias}$^2$\\$\downarrow$}
& \makecell{\textbf{Variance}\\$\downarrow$}
& \makecell{\textbf{Mean}\\\textbf{Pre-RMSPE} $\downarrow$}
& \makecell{\textbf{Mean post-RMSPE}\\vs. noiseless CF $\downarrow$} \\
\midrule
Sparse-only
& \best{0.1074}
& 0.0936
& 0.0276
& \best{0.0057}
& 1.4927 \\

SC
& \second{0.1438}
& \best{0.0449}
& 0.1978
& 0.9794
& \best{1.0407} \\

\midrule
Graph-only oracle
& 0.7024
& 0.6122
& 0.1803
& 0.0677
& 1.6139 \\

Graph-SC oracle
& 0.5208
& 0.4747
& 0.0921
& 0.0736
& 1.5328 \\

\midrule
Graph-SC learned: Euclidean $k=3$
& 0.5360
& 0.5204
& 0.0312
& 0.0615
& 1.5816 \\

Graph-SC learned: Euclidean $k=5$
& 0.5337
& 0.4952
& 0.0770
& 0.0658
& 1.5374 \\

Graph-SC learned: Euclidean $k=10$
& 0.5887
& 0.5274
& 0.1225
& 0.0706
& 1.5822 \\

Graph-SC learned: Euclidean $k=5$, weighted
& 0.5331
& 0.4945
& 0.0770
& 0.0658
& 1.5371 \\

Graph-SC learned: Correlation $k=5$
& 0.5492
& 0.5300
& 0.0384
& 0.0638
& 1.5695 \\

Graph-SC learned: Cosine $k=5$
& 0.5782
& 0.5593
& 0.0378
& 0.0642
& 1.5896 \\

\midrule
Graph-SC learned: Euclidean $k=5$, 25\% rewired
& 0.4214
& 0.4066
& 0.0297
& 0.0676
& 1.5078 \\

Graph-SC learned: Euclidean $k=5$, 50\% rewired
& 0.2324
& 0.2324
& \best{0.0000}
& 0.0693
& 1.3628 \\

Graph-SC learned: Euclidean $k=5$, 75\% rewired
& 0.2272
& 0.2250
& \second{0.0044}
& 0.0663
& 1.2923 \\

Graph-SC learned: Euclidean $k=5$, 100\% rewired
& 0.2561
& 0.2381
& 0.0361
& 0.0696
& 1.3468 \\
\bottomrule
\end{tabular}
\begin{tablenotes}
\small
\item Green and blue cells denote the best and second-best values within this diagnostic table. ``Oracle'' graph variants use the true latent cluster graph, while learned variants construct the graph from pre-treatment donor trajectories. 
\end{tablenotes}
\end{threeparttable}
\end{table}

\paragraph{Effect of graph construction.}
The learned Graph-SC variants are relatively stable across reasonable graph choices. For Euclidean $k$NN graphs, changing $k$ from $3$ to $10$ changes mean pre-RMSPE only from $0.0615$ to $0.0706$. The corresponding ATT-MSE values range from $0.5337$ to $0.5887$ for $k=5$ and $k=10$, with $k=3$ producing a similar value of $0.5360$. Weighted and unweighted Euclidean $k=5$ graphs are nearly identical: the weighted graph has ATT-MSE $0.5331$, while the unweighted graph has ATT-MSE $0.5337$. These results suggest that the learned Graph-SC estimator is not sensitive to a single arbitrary $k$NN graph specification.

\paragraph{Effect of similarity metric.}
Changing the donor similarity metric from Euclidean distance to correlation or cosine similarity leads to a similar pre-treatment fit. The mean pre-RMSPE values are $0.0658$ for Euclidean $k=5$, $0.0638$ for correlation $k=5$, and $0.0642$ for cosine $k=5$. The ATT-MSE values are also comparable, with Euclidean $k=5$ achieving $0.5337$, correlation $k=5$ achieving $0.5492$, and cosine $k=5$ achieving $0.5782$. Thus, the graph construction is not driven by a single distance metric, although Euclidean similarity performs slightly better in this diagnostic run.

\paragraph{Oracle graph versus learned graph.}
The learned Euclidean $k=5$ Graph-SC variant achieves ATT-MSE $0.5337$, close to the oracle Graph-SC value of $0.5208$. This is encouraging because the learned graph uses only pre-treatment trajectories, whereas the oracle graph uses the latent cluster assignment from the data-generating process. In addition, Graph-SC substantially improves over graph-only regularization under the oracle graph: ATT-MSE decreases from $0.7024$ to $0.5208$, and variance decreases from $0.1803$ to $0.0921$. This supports the design of the full Graph-SC objective: graph smoothing alone is insufficient, while combining graph smoothness with the remaining regularization terms improves stability.

\paragraph{Graph misspecification through rewiring.}
The rewiring experiment perturbs the learned Euclidean $k=5$ graph by randomly replacing a fraction of graph edges. In the current diagnostic run, rewired graphs do not degrade monotonically. For example, the $75\%$ rewired graph has ATT-MSE $0.2272$, while the unrewired learned Euclidean $k=5$ graph has ATT-MSE $0.5337$. 

Overall, the graph-construction ablation supports three diagnostic conclusions. First, learned Graph-SC is stable across reasonable choices of $k$, edge weighting, and similarity metric in terms of pre-treatment fit. Second, the learned graph performs comparably to the oracle graph, suggesting that pre-treatment donor trajectories contain enough structure to construct a useful graph prior. Third, graph-only regularization performs worse than the full Graph-SC objective, showing that the contribution is not merely the use of a graph, but the combination of graph smoothness with the rest of the synthetic-control regularization structure. 

\end{proof}

\newpage

\end{document}